\documentclass{article}
\usepackage{microtype}
\usepackage{graphicx, subfigure}
\usepackage{subcaption}
\usepackage{booktabs}
\usepackage{hyperref}

\usepackage[preprint]{icml2026}

\usepackage{amsmath, amssymb, mathtools, amsthm}

\usepackage[capitalize,noabbrev]{cleveref}

\theoremstyle{plain}
\newtheorem{theorem}{Theorem}[section]
\newtheorem{proposition}[theorem]{Proposition}
\newtheorem{lemma}[theorem]{Lemma}
\newtheorem{corollary}[theorem]{Corollary}
\newtheorem{example}[theorem]{Example}
\theoremstyle{definition}
\newtheorem{definition}[theorem]{Definition}

\theoremstyle{remark}
\newtheorem{remark}[theorem]{Remark}

\usepackage{accents}
\newcommand{\ubar}[1]{\underaccent{\bar}{#1}}

\newcommand{\expect}[1]{\mathbb{E}\left[ #1 \right]} 
 
\newcommand{\mexpect}[2]{\mathbb{E}_{#1}\left[ #2\right]}

\newcommand{\prob}[1]{\mathbb{P}\left( #1 \right)}

\newcommand{\supp}{\text{support}} 
\newcommand{\interior}{\text{int}}

\newcommand{\floor}[1]{\left\lfloor #1 \right\rfloor}
\newcommand{\abs}[1]{\left\vert #1 \right\vert}
\newcommand{\norm}[1]{\left\Vert #1 \right\Vert}
\newcommand{\oo}[1]{\left( #1 \right)}
\newcommand{\oc}[1]{\left( #1 \right]}

\newcommand{\cc}[1]{\left[ #1 \right]}

\newcommand{\NormRV}[2]{\ensuremath{\operatorname{N}\left(#1, #2\right)}}

\usepackage[textsize=tiny]{todonotes}

\icmltitlerunning{Unbiased Single-Queried Gradient for Combinatorial Objective}

\begin{document}

\twocolumn[
  \icmltitle{Unbiased Single-Queried Gradient for Combinatorial Objective}

  \begin{icmlauthorlist}
    \icmlauthor{Thanawat Sornwanee}{xxx}
  \end{icmlauthorlist}

  \icmlaffiliation{xxx}{Graduate School of Business, Stanford University, California, USA}

  \icmlcorrespondingauthor{Thanawat Sornwanee}{tsornwanee@stanford.edu}
  
  \icmlkeywords{Machine Learning, ICML, Optimization, Gradient, Orcale, Combinatorial}
  \vskip 0.3in
]

\printAffiliationsAndNotice{}

\begin{abstract}
    In a probabilistic reformulation of a combinatorial problem, we often face an optimization over a hypercube, which corresponds to the Bernoulli probability parameter for each binary variable in the primal problem. The combinatorial nature suggests that an exact gradient computation requires multiple queries. We propose a stochastic gradient that is unbiased and requires only a single query of the combinatorial function. This method encompasses a well-established REINFORCE (through an importance sampling), as well as including a class of new stochastic gradients.
\end{abstract}

\section{Introduction}

Many combinatorial optimization problems are easiest to interact with through a \textbf{pointwise oracle}: given a candidate binary vector $y \in \{0,1\}^d$, we can evaluate a quantity $Q(y)$, but we cannot efficiently access global structure of $Q$ (e.g., its neighborhood behavior or a tractable surrogate). This regime appears in integer programming, where $Q(y)$ may encode constraint violations or penalties: verifying whether a particular assignment violates constraints is straightforward, while understanding how to modify $y$ to reduce violations can be difficult in high dimensions. Another natural use case can be in preference optimization where have to query the user for feedback at each scenario.

A natural way to bring gradient-based optimization to this setting is to relax discrete decisions into probability space, which is continuous. We consider a product-Bernoulli relaxation by considering a random variable
\begin{align*}
    Y \sim P_x := \bigotimes \text{Bernoulli}\oo{x_i}
\end{align*}
with $x \in[0,1]^d$, and define the multilinear extension
\begin{align*}
    v(x) :&= \mexpect{Y \sim P_x}{Q(Y)}\\ &= \sum_{y\in \{0,1\}^d} \oo{Q(y) \prod_{i=1}^d \cc{x_iy_i + (1-x_i)(1-y_i)}}.
\end{align*}
Since the domain of $Q$ is finite, $v$ is well-defined, and infinitely smooth.\footnote{See the corollary~\ref{corollary:infdiff} and the corollary~\ref{corollary:infdiff} in the appendix.} Moreover, we will have that, for any $y \in \{0,1\}^d$, the distribution $P_y$ will just be a degenerate distribution with the value being $y$ itself, making $v(y) = Q(y)$.

We then essentially converting the combinatorial optimization of
\begin{align}
\label{problem:discrete}
    \min_{y \in \{0,1\}^d} Q(y)
\end{align}
into a continuous optimization over a smooth function
\begin{align}
\label{problem:continuous}
    \min_{x \in [0,1]^d} v(x).
\end{align}
We will have that, for any $x^*$ being an optimal solution of the continuous optimization problem~\ref{problem:continuous}, then any element in the support of the distribution $P_{x^*}$ will be an optimal $y^*$ to the combinatorial optimization problem~\ref{problem:discrete}.

In principle, gradient-based methods can optimize this relaxation, and then round back to a discrete solution. In practice, however, gradient estimation is the bottleneck when $Q$ is only available via queries. 

A vanilla gradient computation will require $2^d$ queries, which is the same as solving the original problem to begin with. A finite difference method will require $O(d)$ queries per step, which is prohibitive in large dimensions. This motivates our \textbf{``single-queried"} stochastic gradient: produce an unbiased estimate of the query-dependent gradient $g(x) := \nabla v(x)$ using only one oracle evaluation $Q(\cdot)$ per Monte Carlo sample. 

Concretely, we construct a single-queried stochastic evaluation whose pathwise derivative exists and whose expected gradient matches $\nabla v(x)$, enabling one-query gradient descent via autodiff. The algorithm we proposed is the \textbf{``Easy Stochastic Gradient (ESG)"} as shown in the algorithm~\ref{alg:ESG}. We will provide some conditions that will make this algorithm to return an unbiased estimator of the gradient for any instance of the combinatorial problem $Q(\cdot)$ in the section~\ref{section:sgrad}. Unlike preexisting methods like REINFORCE, the stochastic value before the gradient is taken in our ESG algorithm will also be an unbiased estimation of the value $v(x)$ itself.

\begin{algorithm}[tb]
\caption{Easy Stochastic Gradient (ESG) via AutoGrad}
\label{alg:ESG}
\begin{algorithmic}
    \STATE {\bfseries Input:} state $x$, oracle $Q(\cdot)$
    \FOR{$i \in \{1,2,\dots,d\}$}
        \STATE Embed $e_i \gets \hat{\sigma}^{-1}(x_i)$;
        \STATE Sample noise $\epsilon_i \overset{\text{iid}}{\sim} \sigma$;
        \STATE Compute stochastic score $z_i \gets e_i +\epsilon_i$;
        \STATE Compute stochastic query $k_i \gets \mathbf{1}_{z_i \ge 0}$;
        \STATE Compute $f_i \gets f\oo{\abs{z_i}}$
    \ENDFOR
    \STATE Query $q \gets Q(k)$;
    \STATE Compute stochastic value $v \gets q \times \prod_{i=1}^d f_i$;
    \STATE Compute stochastic gradient $g \gets \text{AutoGrad}(v, x)$.
    \STATE {\bfseries Return:}
    $(v,g)$
\end{algorithmic}
\end{algorithm}
\begin{figure}[ht]
  \begin{center}
    \centerline{\includegraphics[width=\columnwidth]{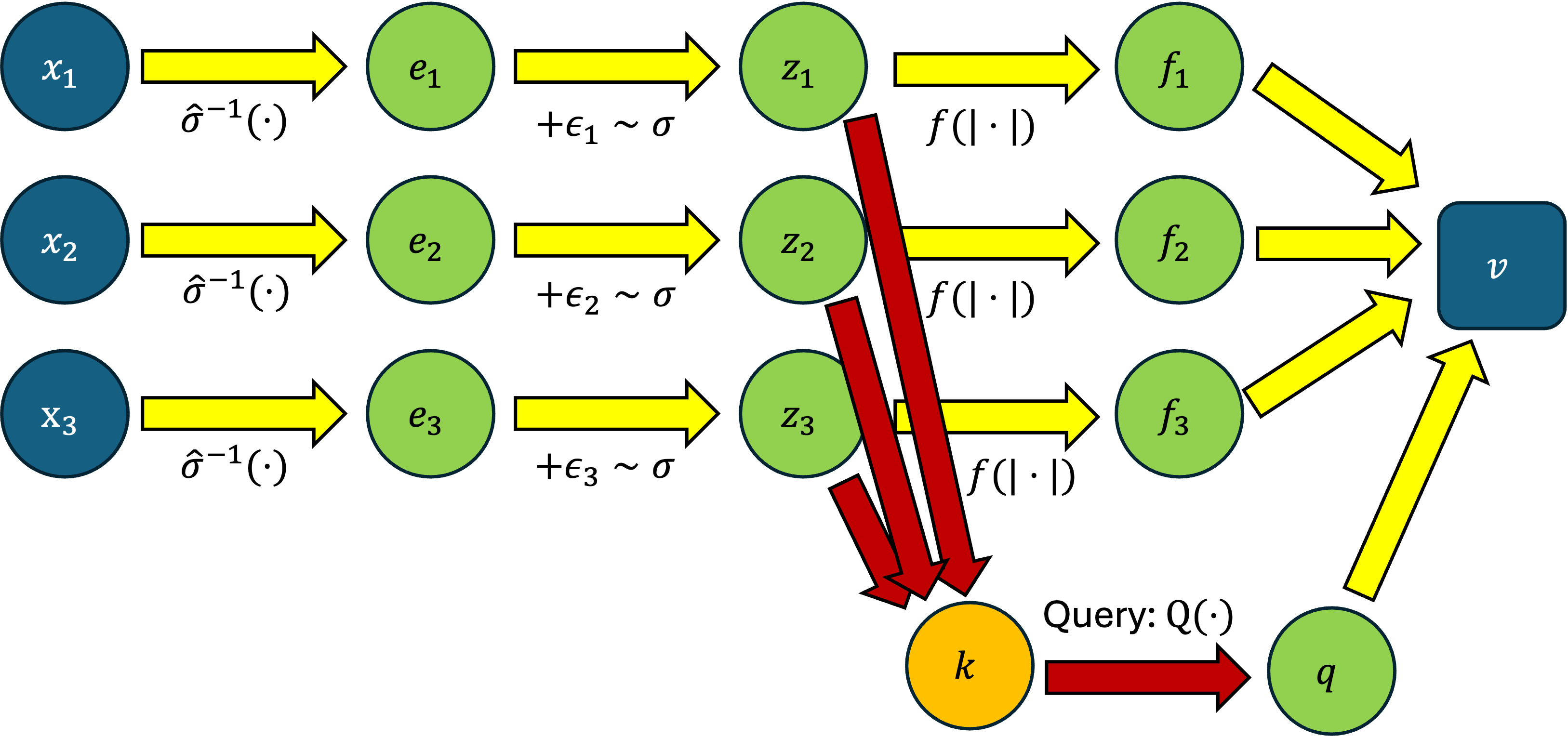}}
    \caption{
        Computational graph for our ESG algorithm~\ref{alg:ESG}: The yellow arrows represent operations that allow backpropagation of gradient, while the red one, which requires either rounding or querying, does not. The stochastic value $v$ itself can be taken gradient with respect to $x$, while yielding an unbiased gradient estimator.
    }
    \label{fig:ESG}
  \end{center}
\end{figure}

\paragraph{Why the obvious approach fails.}
It is easy to obtain an unbiased value estimator with one query: sample a key $K(x)\sim P_x$ (e.g., thresholding uniform noise) and return $V(x)=Q(K(x))$, so that $\expect{V(x)}=v(x)$. It is then tempting to think that, with sufficient regularity,
\begin{align*}
    \expect{\nabla V(x)} =\nabla \expect{V(x)} = \nabla v(x) = g(x),
\end{align*}
so, by choosing $G(x)$ to be $\nabla V(x)$, we have already solved the problem.
However, differentiating such an estimator pathwise is futile: $K(x)$ is discrete, so $\nabla Q(K(x))$ is either zero or undefined\footnote{In thresholding uniform noise method, it is zero almost surely since the boundary has zero measure.}, so
\begin{align*}
    \expect{\nabla V(x)} = 0 \ne \nabla \expect{V(x)}
\end{align*}
in general.

This is the core obstacle: we want single-query gradients, but the most natural single-query evaluations are not differentiable in a way that will be compatible to machine learning or optimization methods.

\section{Related Work}
\label{sec:related}

We study gradients of objectives of the form $\nabla_x \,\mathbb{E}_{\epsilon}[Q(K(x,\epsilon))]$ where $K$ produces \emph{discrete} random outcomes (e.g., rounding, argmax/top-$k$, combinatorial selections) and $Q$ is a black-box oracle evaluated \emph{only} on these discrete outcomes. This places our setting at the intersection of (i) gradient estimation for discrete random variables, (ii) continuous relaxations such as Gumbel-Softmax, and (iii) zeroth-order/black-box optimization.

\subsection{Score-function estimators and variance reduction}
A standard tool for discrete random variables is the score-function estimator, widely known as REINFORCE \cite{williams1992simple,schulman2015gradient}, which comes from the mathematical identity
\begin{align*}
    \nabla_x \mexpect{Y \sim P_x}{Q(Y)}
    =
    \mexpect{Y \sim P_x}{Q(Y) \nabla_x \log P_x(\{Y\})}.
\end{align*}
However, we will have that we cannot interpret $Q(Y)  P_x(\{Y\})$ as an unbiased estimator of $v(x)$, since we essentially perform differentiation on the probability space. Moreover, despite being unbiased, score-function estimators can suffer from high variance, motivating a large body of variance-reduction methods such as baselines and control variates \citep{schulman2015gradient}. Several influential estimators combine score-function gradients with learned or analytic control variates, including REBAR \citep{tucker2017rebar} and RELAX \citep{grathwohl2018relax}, which leverage continuous relaxations. For binary decisions, ARM \citep{yin2019arm} and DisARM \citep{dong2020disarm} provide unbiased estimators with strong variance reductions via augmentation and antithetic constructions.
Our work is close in spirit to this line—seeking unbiased (or provably controlled-bias) gradients—while targeting a different regime where the oracle $Q(\cdot)$ is only available at discrete solutions and oracle calls are the dominant cost.

\subsection{Continuous relaxations and straight-through gradients}
Another major approach is to replace discrete sampling/selection with a differentiable relaxation and backpropagate through the relaxed computation. The Gumbel-Softmax relaxation \citep{jang2017gumbel,maddison2017concrete} enables pathwise differentiation by sampling a continuous proxy that becomes discrete in a low-temperature limit. In practice, many implementations additionally apply a straight-through trick: use the discrete outcome in the forward pass, but backpropagate through the continuous relaxation in the backward pass \citep{bengio2013st, jang2017gumbel}. While often effective heuristically, straight-through trick produces a biased gradient estimator in general because the backward pass differentiates a surrogate computation that is not the one used in the forward pass \citep{bengio2013st}. This bias is especially problematic in our setting where $Q$ is only defined on the discrete output and cannot be meaningfully evaluated on the relaxed proxy, so the relaxation cannot directly serve as a faithful control variate without additional structure.

Recent papers~\cite{geng2025diffilo, li2025deft} have suggested that a relaxation to smooth value may work as long as $\nabla V(x)$ exists pathwise almost everywhere. However, the paper~\cite{sornwanee2026notdiff} shows that it is incorrect. Even in 1 dimensional case, the interchange for
\begin{align*}
    \frac{\partial}{\partial x} \expect{f(x, \epsilon)} = \expect{\frac{\partial}{\partial x} f(x, \epsilon)} 
\end{align*}
for all $x \in (0, 1)$ is not generally  possible. To get the interchangeability
of the differentiation, we normally require
\begin{align*}
    \expect{f(z, \epsilon) - f(y, \epsilon)}
    &= \expect{\int_{x=y}^z\frac{\partial}{\partial x}f(x, \epsilon) dx}\\
    &= \int_{x=y}^z \expect{\frac{\partial}{\partial x}f(x, \epsilon)} dx.
\end{align*}
This is generally not true when $f$ is discontinuous, which can happen when thresholding\footnote{which is similar to straight-through estimator where we have to select one discrete variable, often the hard maximum. This problem is then similar to the local optimization method in multi-level optimizations~\cite{jalota2024simple,sornwanee20251}.} is done. In our design, we will use a function such that, at the thresholding boundary, the value is always $0$ to avoid the discontinuity implication.

\subsection{Zeroth-order / black-box gradient methods}
When only function values are available, classical zeroth-order methods approximate gradients via finite differences, dating back to Kiefer-Wolfowitz stochastic approximation \citep{kiefer1952kw}. Simultaneous Perturbation Stochastic Approximation \citep{spall1992spsa} reduces query complexity by perturbing all coordinates simultaneously, and related random-direction estimators form the basis of modern derivative-free optimization. These methods typically require some smoothness assumptions\footnote{Even in a discrete grid case, the papers~\cite{wang2011discrete,11180056} require the approximation of the gradient by the midpoint gradient.} on the underlying objective and may still require multiple oracle calls per update for competitive variance.~\cite{bengio2013st} In contrast, our focus is on expectations induced by discrete random mappings $K(x)$ and on estimators tailored to this structure, with the goal of minimizing oracle calls.

\section{Stochastic Gradient}
\label{section:sgrad}

    The computation of the gradient $g(x)$ itself may require multiple query to the function $Q$, so we relax the requirement into finding a ``stochastic" gradient $G: (0,1)^d \to \mathbb{R}$ where $G(x)$ is defined upon the same random space for all $x \in (0,1)^d$ such that
    \begin{align*}
        \expect{G(x)} = g(x),
    \end{align*}
    while only require a single query of $Q$ per realization of $G(x)$. This also has to be universal across any oracle $Q: \{0,1\}^d \to \mathbb{R}$.

    Note that, we have also relaxed the requirement of the domain from $[0,1]^d$ to $(0,1)^d$, which will later be useful for defining the function. However, this relaxation is not problematic given that the corollary~\ref{corollary:infdiff} implies continuity of gradient.

        As we have seen earlier, an easy motivated example of the existence of such stochastic gradient is from that stochastic evaluation is trivial. However, if we want pathwise differentiation to exist in a correct way, we need more conditions, such as continuity after the thresholding.

        To give some sufficient condition, we will first rigorously define a stochastic valuation with single query, and then provide some condition to ensure that we can define a stochastic gradient with single query $G(x)$ to just be $\nabla V(x)$.

        Throughout, we consider $\oo{\Omega, \mathcal{F}, \mathbb{P}}$ to be a probability space, to be specified, and consider the function to be measurable with respect to $\oo{\Omega, \mathcal{F}}$ or jointly measurable with respect to an appropriate measurable space created by a combination of $\oo{\Omega, \mathcal{F}}$, Borel $\sigma$-algebra, and powerset of a finite set

        We want the function to be universal with respect to the query function $Q: \{0,1\}^d \to \mathbb{R}$, so we redefine $v: [0,1]^d \times \mathbb{R}^{\oo{\{0,1\}^d}} \to \mathbb{R}$ and $g: [0,1]^d \times \mathbb{R}^{\oo{\{0,1\}^d}} \to \mathbb{R}^d$ such that, for any $x \in[0,1]^d$, $Q: \{0,1\}^d \to \mathbb{R}$, 
        \begin{align*}
            v(x; Q) :&= \mexpect{Y \sim P_{x}}{Q(Y)}\\ &= \sum_{y\in \{0,1\}^d} \oo{Q(y) \prod_{i=1}^d \cc{x_iy_i + (1-x_i)(1-y_i)}},
        \end{align*}
        and
        \begin{align*}
            g(x; Q) := \nabla_x v(x; Q),
        \end{align*}
        which is well-defined given the corollary~\ref{corollary:infdiff}.

        \begin{definition}
        \label{def:sgrad}
            A function $G: (0,1)^d \times \Omega \times \mathbb{R}^{\oo{\{0,1\}^d}} \to \mathbb{R}^d$ is a single-queried stochastic gradient function with respect to $\oo{\Omega, \mathcal{F}, \mathbb{P}}$ if there exists some key function $K: (0,1)^d \times \Omega \to \{0,1\}^d$, and some main function $f_{\text{g}}: (0,1)^d \times \Omega \times \mathbb{R}$ such that, for every $x \in (0,1)^d$, and every $Q: \{0,1\}^d \to \mathbb{R}$,
            \begin{align*}
                G(x; \omega, Q) := f_{\text{g}}(x, \omega, Q(K(x; \omega)))
            \end{align*}
            for $\mathbb{P}$-almost surely $\omega \in \Omega$,
            \begin{align*}
                \expect{G(x; \omega, Q)} = g(x; Q).
            \end{align*}
        \end{definition}
        
        \begin{definition}
        \label{def:seval}
            A function $V: (0,1)^d \times \Omega \times \mathbb{R}^{\oo{\{0,1\}^d}} \to \mathbb{R}$ is a single-queried stochastic evaluation function with respect to $\oo{\Omega, \mathcal{F}, \mathbb{P}}$ if there exists some key function $K: (0,1)^d \times \Omega \to \{0,1\}^d$, and some main function $f_{\text{v}}: (0,1)^d \times \Omega \times \mathbb{R}$ such that, for every $x \in (0,1)^d$, and every $Q: \{0,1\}^d \to \mathbb{R}$,
            \begin{align*}
                V(x; \omega, Q) := f_{\text{v}}(x, \omega, Q(K(x; \omega)))
            \end{align*}
            for $\mathbb{P}$-almost surely $\omega \in \Omega$,
            \begin{align*}
                \expect{V(x; \omega, Q)} = v(x; Q).
            \end{align*}
        \end{definition}
        This definition suggests the following necessary and sufficient condition:
        \begin{proposition}
        \label{proposition:seval_characterization}
            With respect to $\oo{\Omega, \mathcal{F}, \mathbb{P}}$, $V$ is a stochastic evaluation function w with $f_v$ and $K$ as in the definition~\ref{def:seval} if and only if, for all $q \in \mathbb{R}$, $x \in (0,1)^d$, and $y \in \{0,1\}^d$,
            \begin{align*}
                \expect{f_{\text{v}}(x, \omega, q) \mathbf{1}_{K(x;\omega) = y}} = q \prod_{i=1}^d \cc{x_iy_i + (1-x_i)(1-y_i)}.
            \end{align*}
        \end{proposition}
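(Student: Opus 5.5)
The plan is to prove both directions by decomposing the expectation over the finitely many values of the key $K(x;\omega)$, and then choosing the oracle $Q$ cleverly to isolate individual terms. Throughout I write $p_x(y) := \prod_{i=1}^d \cc{x_iy_i + (1-x_i)(1-y_i)}$. Since $\{0,1\}^d$ is finite, for any $Q$ and $x$ we have $\mathbf{1}_{K(x;\omega)=y}$ summing to one over $y$. Hence
\begin{align*}
    f_{\text{v}}(x,\omega,Q(K(x;\omega))) = \sum_{y\in\{0,1\}^d} f_{\text{v}}(x,\omega,Q(y))\,\mathbf{1}_{K(x;\omega)=y}.
\end{align*}
By linearity of expectation (a finite sum), this gives
\begin{align*}
    \expect{V(x;\omega,Q)} = \sum_{y} \expect{f_{\text{v}}(x,\omega,Q(y))\,\mathbf{1}_{K(x;\omega)=y}}.
\end{align*}
I will take as implicit in the definition that each of these expectations is well-defined, i.e.\ the relevant integrability holds.

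\textbf{Sufficiency.} Assume the displayed condition holds for all $q$, $x$, $y$. Apply it with $q = Q(y)$ for each $y$. Then
\begin{align*}
    \expect{V(x;\omega,Q)} = \sum_y Q(y)\,p_x(y) = v(x;Q),
\end{align*}
so $V$ is a single-queried stochastic evaluation function in the sense of Definition~\ref{def:seval}.

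\textbf{Necessity.} This is the step needing care, since unbiasedness only constrains a sum over $y$, and we must extract each summand. Fix $x$, $q$ and a target $y_0$. The tool is the universality over all $Q$, exploited with two test oracles:
\begin{itemize}
    \item $Q_0 \equiv 0$, which gives $\sum_y \expect{f_{\text{v}}(x,\omega,0)\mathbf{1}_{K=y}} = 0$, that is, $\expect{f_{\text{v}}(x,\omega,0)} = 0$;
    \item $Q_1 := q\,\mathbf{1}_{\{y_0\}}$, which gives $\expect{f_{\text{v}}(x,\omega,q)\mathbf{1}_{K=y_0}} + \expect{f_{\text{v}}(x,\omega,0)\mathbf{1}_{K\neq y_0}} = q\,p_x(y_0)$.
\end{itemize}
So it suffices to show that $\expect{f_{\text{v}}(x,\omega,0)\mathbf{1}_{K=y}} = 0$ for each individual $y$, not merely that the sum over $y$ vanishes. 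The first oracle only yields the total being zero, which is exactly the main obstacle.

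\textbf{Resolving the obstacle.} My first attempt is to use oracles taking values in $\{0,q\}$ on arbitrary subsets $S$ of $\{0,1\}^d$, namely $Q_S := q\,\mathbf{1}_S$. Write $a_y := \expect{f_{\text{v}}(x,\omega,q)\mathbf{1}_{K=y}}$ and $b_y := \expect{f_{\text{v}}(x,\omega,0)\mathbf{1}_{K=y}}$. The oracle $Q_S$ yields
\begin{align*}
    \sum_{y\in S} a_y + \sum_{y\notin S} b_y = q\sum_{y\in S} p_x(y) \qquad \text{for every } S.
\end{align*}
Comparing $S$ with $S\cup\{y_0\}$ gives $a_{y_0} - b_{y_0} = q\,p_x(y_0)$ for every $y_0$. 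Taking $q=0$ only reproduces $\sum_y b_y = 0$, so the family of identities pins down the differences $a_y - b_y$ but not the individual $b_y$; subset oracles cannot separate the $b_y$ individually.

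If this cannot be closed, I would argue the claim is read with $f_{\text{v}}(\cdot,\cdot,0)$ contributing zero on each key cell. That holds, for instance, when $f_{\text{v}}$ is linear in $q$, as in the ESG form $q\prod_i f_i$, where $b_y = 0$ automatically. In that case the condition follows directly from the displayed identity. I expect the final argument to be: establish the difference identity via subset oracles, then note $b_y = 0$ under the multiplicative or linear structure, or adopt it as the intended reading. I flag this as the delicate point of the proof.
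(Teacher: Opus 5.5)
The obstacle you flagged cannot be removed by a cleverer choice of oracle, because the ``only if'' direction is false as stated. Take $d=1$, $\Omega=(0,1)$ with Lebesgue measure, $K(x;\omega)=\mathbf{1}_{\omega\le x}$ and $f_{\text{v}}(x,\omega,q)=q+\mathbf{1}_{\omega\le x}-x$. Then $V(x;\omega,Q)=Q(K(x;\omega))+\mathbf{1}_{\omega\le x}-x$ satisfies $\expect{V(x;\omega,Q)}=v(x;Q)$ for every $x\in(0,1)$ and every $Q$. So $V$ is a single-queried stochastic evaluation function in the sense of Definition~\ref{def:seval}. Yet
\begin{align*}
    \expect{f_{\text{v}}(x,\omega,q)\mathbf{1}_{K(x;\omega)=1}} = qx + x(1-x) \neq qx
\end{align*}
for every $q\in\mathbb{R}$ and $x\in(0,1)$. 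In your notation $b_1=x(1-x)=-b_0$: the $b_y$ sum to zero but are individually nonzero, which is exactly the slack you identified. More generally, adding to $f_{\text{v}}$ any integrable $c(x,\omega)$ with $\expect{c(x,\omega)}=0$ preserves unbiasedness but shifts each cell by $\expect{c(x,\omega)\mathbf{1}_{K(x;\omega)=y}}$. The paper's own proof has precisely this hole. It sets $Q(y')=0$ for $y'\neq y$ and reads off $Q(y)\,p_x(y)=\expect{f_{\text{v}}(x,\omega,Q(y))\mathbf{1}_{K(x;\omega)=y}}$, silently dropping $\sum_{y'\neq y}\expect{f_{\text{v}}(x,\omega,0)\mathbf{1}_{K(x;\omega)=y'}}$.

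Your sufficiency argument is correct and matches the paper's. It is also the only direction used later, in the proof of Theorem~\ref{theorem:easy}. What your two-oracle and subset-oracle computations actually prove is the correct characterization: $V$ is a single-queried stochastic evaluation function if and only if, for all $x\in(0,1)^d$, $q\in\mathbb{R}$, $y\in\{0,1\}^d$,
\begin{align*}
    \expect{f_{\text{v}}(x,\omega,q)\mathbf{1}_{K(x;\omega)=y}}-\expect{f_{\text{v}}(x,\omega,0)\mathbf{1}_{K(x;\omega)=y}} = q\,p_x(y)
    \quad\text{and}\quad
    \expect{f_{\text{v}}(x,\omega,0)}=0.
\end{align*}
Necessity comes from $Q\equiv 0$ and $Q=q\mathbf{1}_{\{y_0\}}$; sufficiency follows by summing over $y$. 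These two oracles also give integrability of each $f_{\text{v}}(x,\omega,q)\mathbf{1}_{K(x;\omega)=y}$, so you need not assume it. The proposition's condition is recovered exactly under the extra hypothesis $f_{\text{v}}(x,\omega,0)=0$ almost surely, e.g.\ when $f_{\text{v}}$ is linear in $q$ as in the ESG construction. So rather than adopting this as ``the intended reading,'' you should state plainly that the equivalence fails as written. Then either add that hypothesis explicitly or replace the statement with the corrected characterization above.
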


        Recall that the gradient of the expected stochastic evaluation function may not be the same as the expected gradient of the stochastic evaluation function. If we want to have this property, we have to restrict it to a smaller class of stochastic functions.
        \begin{definition}
        \label{def:diffseval}
            A single-queried stochastic evaluation function $V$ is a gradientable stochastic evaluation function with respect to $\oo{\Omega, \mathcal{F}, \mathbb{P}}$ if, for any $x\in (0,1)^d$ and any $Q: \{0,1\}^d \to \mathbb{R}$, 
            \begin{align*}
                \expect{\nabla_x V(x; \omega, Q)} = \nabla_x v(x;Q),
            \end{align*}
            and $\nabla_x V(x; \omega, Q)$ exists for every $Q: \{0,1\}^d \to \mathbb{R}$, every $x \in (0,1)^d$, and $\mathbb{P}$-almost surely $\omega \in \Omega$, respectively.
        \end{definition}
        Note that, we require almost surely $\omega \in \Omega$ after fixing a value of $x \in (0,1)^d$ since, conditioned\footnote{in a do-algebra sense} on the arbitrary current value of $x \in (0,1)^d$, we want to be able to retrieve the stochastic gradient with probability $1$.

        \begin{corollary}
        \label{corollary:sgradfromdiffsval}
            With respect to $\oo{\Omega, \mathcal{F}, \mathbb{P}}$, if $V: (0,1)^d \times \Omega \times \mathbb{R}^{\oo{\{0,1\}^d}} \to \mathbb{R}$ is a single-queried gradientable stochastic evaluation function, then a function $G: (0,1)^d \times \Omega \times \mathbb{R}^{\oo{\{0,1\}^d}} \to \mathbb{R}^d$ defined on every $(x, \omega, Q)$ in the region where $\nabla_x V(x; \omega, Q)$ is well-defined to be
            \begin{align*}
                G(x; \omega, Q) = \nabla_x V(x; \omega, Q)
            \end{align*}
            is a single-queried stochastic gradient function.
        \end{corollary}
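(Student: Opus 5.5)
The plan is to verify the two requirements of Definition~\ref{def:sgrad} for $G := \nabla_x V$ directly: the single-query structural form, and unbiasedness. Both follow from Definition~\ref{def:seval} and Definition~\ref{def:diffseval}. The only real work is to show that the gradient of $V$ still depends on $Q$ only through the single value $Q(K(x;\omega))$.

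\textbf{Unbiasedness.} Fix $x \in (0,1)^d$ and $Q$. By Definition~\ref{def:diffseval}, $\nabla_x V(x;\omega,Q)$ exists for $\mathbb{P}$-almost every $\omega$. On that full-measure set $G(x;\omega,Q)$ is defined, and
\begin{align*}
\expect{G(x;\omega,Q)} = \expect{\nabla_x V(x;\omega,Q)} = \nabla_x v(x;Q) = g(x;Q).
\end{align*}
Off that set, $G$ may be assigned arbitrarily (say $0$), since a null set does not affect the expectation.

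\textbf{Single-query form.} Keep the key function $K$ from $V$. We need a main function $f_{\text{g}}$ with $G(x;\omega,Q) = f_{\text{g}}(x,\omega,Q(K(x;\omega)))$ almost surely. The natural candidate is to differentiate $f_{\text{v}}(\cdot,\omega,q)$ in its first argument with $q$ frozen:
\begin{align*}
f_{\text{g}}(x,\omega,q) := \nabla_x f_{\text{v}}(x,\omega,q)\big|_{q \text{ held fixed}}.
\end{align*}
Justifying this is the main obstacle. The identity $V(x';\omega,Q) = f_{\text{v}}(x',\omega,Q(K(x';\omega)))$ holds for $x'$ near $x$, but $K(x';\omega)$ might change with $x'$, so the queried value is not a priori frozen.

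I would handle this by exploiting the universality over $Q$. Since $\nabla_x V(x;\omega,Q)$ exists for every $Q$, one can compare $Q$ with $\tilde Q$, where $\tilde Q$ agrees with $Q$ at $K(x;\omega)$ and differs elsewhere. Because $K$ takes values in a finite set, arbitrary perturbations of $Q$ away from $K(x;\omega)$ would destroy differentiability if $K(\cdot;\omega)$ jumped at $x$ in a way that mattered. Hence, on the relevant events, either $K(\cdot;\omega)$ is locally constant at $x$, or $f_{\text{v}}$ vanishes at the boundary in a way that makes the derivative independent of the other values of $Q$.

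If making this rigorous in full generality proves too delicate, I would fall back on a weaker construction that suffices for the definition. Define $f_{\text{g}}(x,\omega,q) := \nabla_x V(x;\omega,Q_q)$, where $Q_q$ is the constant oracle equal to $q$. Then argue that $\nabla_x V(x;\omega,Q)$ depends on $Q$ only through $Q(K(x;\omega))$. Measurability of $G$ would come from writing $\nabla_x V$ as a limit of measurable difference quotients. Together with the unbiasedness above, this completes the corollary.
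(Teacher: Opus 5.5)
Your unbiasedness step is correct, and it is essentially the paper's entire proof, which treats the corollary as immediate from Definitions~\ref{def:sgrad} and~\ref{def:diffseval}. You are right that the single-query form of $\nabla_x V$ is the real content. However, your argument for it has a genuine gap: the claim that $\nabla_x V(x;\omega,Q)$ depends on $Q$ only through $Q(K(x;\omega))$ is never proved, and with the key inherited from $V$ it is false. Take $d=1$ and $\Omega=\{a,b,c\}$ uniform, with $f_{\text{v}}(x,a,q)=3xq$, $K(\cdot;a)\equiv 1$, $f_{\text{v}}(x,b,q)=(\tfrac72-4x)q$, $K(\cdot;b)\equiv 0$, $f_{\text{v}}(x,c,q)=(x-\tfrac12)q$ and $K(x;c)=\mathbf{1}_{x=1/2}$. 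This $V$ is unbiased and gradientable. But $V(x;c,Q)=(x-\tfrac12)Q(0)$ for every $x$, so $\nabla_xV(\tfrac12;c,Q)=Q(0)$, while $Q(K(\tfrac12;c))=Q(1)$. Your frozen-$q$ choice and your constant-oracle fallback both return $Q(1)$ here, on an event of probability $\tfrac13$. This is exactly the case your dichotomy mishandles: $f_{\text{v}}$ vanishes at the switching point, and the derivative then depends \emph{only} on the value that was not queried.

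To close the gap you must do two things. First, allow a new key $K'$ for $G$, which Definition~\ref{def:sgrad} permits. Second, prove that $\nabla_xV(x;\omega,\cdot)$ depends on at most one coordinate of $Q$. This second step needs true differentiability, with directional derivatives given linearly by the gradient, not mere existence of partials. Here is why it works under that assumption. If changing only $Q(y)$ changes the gradient, then $D=V(\cdot;\omega,Q)-V(\cdot;\omega,\tilde Q)$ vanishes on $\{K(\cdot;\omega)\neq y\}$ but has a nonzero gradient $\gamma$ at $x$. Hence $x+te\in\{K(\cdot;\omega)=y\}$ for all small $t\neq 0$ whenever $\gamma^\top e\neq 0$. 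Two distinct $y$ cannot both have this property, so $K'(x;\omega)$ can be taken to be the unique relevant $y$.

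If the existence of $\nabla_xV$ is read as mere existence of partial derivatives, the corollary is in fact false for $d=2$. Let $u=x_1-\tfrac12$, $w=x_2-\tfrac12$, and let $\rho$ be smooth, equal to $1$ on $[-\tfrac14,\tfrac14]$ and $0$ off $(-\tfrac12,\tfrac12)$. Define $\psi_1=u\,\rho(w/u)$ and $\psi_2=w\,\rho(u/w)$, set to $0$ when the denominator vanishes. These are Lipschitz, satisfy $\psi_1\psi_2\equiv 0$, have partials everywhere, and have $\nabla\psi_1=(1,0)$, $\nabla\psi_2=(0,1)$ at $(\tfrac12,\tfrac12)$. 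Take an atom carrying $V=\psi_1Q(y^{(1)})+\psi_2Q(y^{(2)})$, which is single-queried via $K=y^{(1)}$ iff $\psi_1\neq 0$, and compensate with other atoms as in the one-dimensional example. This gives $\nabla_xV=(Q(y^{(1)}),Q(y^{(2)}))$ at that point, which no single query can produce.

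Finally, the representation $V=f_{\text{v}}(x,\omega,Q(K(x;\omega)))$ and the existence of $\nabla_xV$ each hold only almost surely, separately for each $(x,Q)$. So comparing two oracles at a common $\omega$, and establishing measurability of $G$, requires a countable-dense-$Q$ or Fubini argument that your sketch does not supply.
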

        \begin{proof}
            This follows directly from the definitions~\ref{def:sgrad} and~\ref{def:diffseval}.
        \end{proof}

        \begin{remark}[Stochastic Evaluation by Stochastic Key]
        \label{remark:stichastickey}
            We will see that we can create a simple construction of a stochastic evaluation by random evaluation such that
            \begin{align*}
                V(x) = Q(K(x))
            \end{align*}
            where the stochastic key function $K(x) \sim P_x$ for all $x \in (0,1)^d$. From the lemma~\ref{lemma:calibrated}, we can simply choose the stochastic key $K(x)$ to be $\cc{\mathbf{1}_{\sigma^{-1}(x_i)-\epsilon_i > 0}}_{i=1}^d$ for some invertible symmetric distribution $\sigma$ as defined in the appendix~\ref{appendix:1ddistribution}, and get that $\expect{V(x)} = v(x)$ for all $x \in (0,1)^d$ as wanted.
    
            However, as we have discussed, the gradient of $V(x)$ constructed by this method will not be the stochastic gradient $G(x)$.
        \end{remark}

        \begin{remark}[REINFROCE Gradient]
            Similar to the setting used in the remark~\ref{remark:stichastickey}, we can define a REINFORCE gradient
            \begin{align*}
                G(x) = Q\oo{K(x)} \log P_x\oo{\left\{K(x)\right\}}.
            \end{align*}
            This will be a single-queried stochastic gradient, but it is not a gradient of any single-queried gradientable stochastic evolution $V(x)$.
        \end{remark}

    \subsection{Easy Stochastic Gradient}

        We restrict the search to a simple formulation of the stochastic evaluation to find one that is gradientable. From the proposition~\ref{proposition:seval_characterization}, it is natural to consider the case when
        \begin{align*}
            f_{\text{v}}(x, \omega, q) = \hat{f}_{\text{v}}(x, \omega) q.
        \end{align*}
        We can see that this does not have to be the case, but it will help simplified the search, since now, we can just find $\hat{f}_{\text{v}}: (0,1)^d \times \Omega \to \mathbb{R}$ such that
        \begin{align*}
            \expect{\hat{f}_{\text{v}}(x, \omega) \mathbf{1}_{K(x;\omega) = y}} = \prod_{i=1}^d \cc{x_iy_i + (1-x_i)(1-y_i)}.
        \end{align*}
        The product structure and symmetry also suggests that we should consider $\hat{f}_{\text{v}}$ that is a product of a function of each coordinate. Thus, we consider $\Omega = (0,1)^d$ with $\Omega$ being a Borel $\sigma$-algebra with $\mathbb{P}$ being a Lebesgue measure, and consider the case when
        \begin{align*}
            \hat{f}_{\text{v}}(x, \omega) = \prod_{i=1}^d \hat{\hat{f}}_{\text{v}}(x_i, \omega_i),
        \end{align*}
        and
        \begin{align*}
            \mathbf{1}_{K(x;\omega) = y} = \prod_{i=1}^d  \mathbf{1}_{\hat{K}(x_i;\omega_i) = y_i}.
        \end{align*}
        We can then restrict the search to be finding $\hat{\hat{f}}_{\text{v}}$ and $\hat{K}$ such that
        \begin{align*}
            \expect{\hat{\hat{f}}_{\text{v}}(x_i, \omega_i) \mathbf{1}_{\hat{K}(x_i;\omega_i) = y_i}}= x_iy_i + (1-x_i)(1-y_i).
        \end{align*}

        Note that this search restriction is crucial since it converts a high-dimensional problem into a one-dimensional problem. This also suggests that, if we can find such $\oo{{\hat{\hat{f}}_{\text{v}}}, \hat{K}}$, we can reuse the same function for any different dimension problem. 
        
        Another condition that will aid both computation and search is to suggest that there exist some $f: \mathbb{R}_0^+ \to \mathbb{R}$ with $f(0) = 0$, $\hat{\sigma}, \sigma$ being a well-invertible symmetric distributions\footnote{We later relax that $\sigma$ needs to be well-invertible in the definition~\ref{def:goodtuple},} such that
        \begin{align*}
            \hat{\hat{f}}_{\text{v}}(x_i, \omega_i) = f\oo{\hat{\sigma}^{-1}\oo{x_i} +\sigma^{-1}\oo{\omega_i}},
        \end{align*}
        and
        \begin{align*}
            \hat{K}(x_i, \omega_i) = 
            \mathbf{1}_{\hat{\sigma}^{-1}\oo{x_i} +\sigma^{-1}\oo{\omega_i} \ge 0}.
        \end{align*}
        Thus, the requirement is
        \begin{align*}
            \expect{f\oo{\abs{\hat{\sigma}^{-1}\oo{x_i} +\sigma^{-1}\oo{\omega_i}}} \mathbf{1}_{\hat{\sigma}^{-1}\oo{x_i} +\sigma^{-1}\oo{\omega_i} \ge 0}}  = x_i.
        \end{align*} Furthermore, if $f$ is absolutely continuous with some minor regularity conditions, we will have that such $V$ is gradientable.

        We formalize it through the following definition and theorem.
        \begin{definition}
        \label{def:goodtuple}
            A tuple $(f, \hat{\sigma}, \sigma)$ is a good tuple if
            \begin{enumerate}
                \item $\sigma \in \Delta\oo{\mathbb{R}}$ is a symmetric distribution,
                \item $\hat{\sigma} \in \Delta\oo{\mathbb{R}}$ is a symmetric and positively-differentiable distribution,
                \item $f: \mathbb{R} \to \mathbb{R}$ is absolutely continuous with $f(x)=0$ for all $x\le 0$ and there exists a set $A \subseteq \mathbb{R}^+_0$ such that
                \begin{itemize}
                    \item $f(x)$ is bounded for all $x \in A$;
                    \item $f'(x)$ exists and is bounded for all $x \in A$;
                    \item $\sigma\oo{A + \{z\}} = 1$ for all $z \in \interior\oo{\supp\oo{\hat{\sigma}}}$,
                \end{itemize}
            \end{enumerate}
            and, for all $x \in (0,1)$,
            \begin{align*}
                \mexpect{\epsilon \sim \sigma}{f\oo{\hat{\sigma}^{-1}\oo{x} +\epsilon}} = x.
            \end{align*}
        \end{definition}

        In the case where $\sigma$ admits a pdf $\sigma'$, we will have that the condition is more or less equivalent to
        \begin{align*}
            \oo{f \ast \sigma'}(z) = \hat{\sigma}(z).
        \end{align*}
        
        Although we only evaluate $f$ in the non-positive region and set $f$ to be $0$ in non-positive region, we extend its definition to the negative region setting it to be $0$. This simplifies the notion of differentiation at $0$, ensuring that, if $\oo{\sigma \ast \delta_z}(\{0\})$ is non-zero, then the derivative has to be $0$. This will help with the indicator function, which is used to change the stochastic score to stochastic key as can be seen in the following theorem.

        \begin{theorem}
        \label{theorem:easy}
            With respect to $\oo{\Omega, \mathcal{F}, \mathbb{P}}$, for any good tuple $\oo{f, \sigma, \hat{\sigma}}$, if there exist random variables $\epsilon_i: \Omega \to \mathbb{R}$ for $i \in \{1,2,\dots, d\}$ such that $\epsilon_i \overset{\text{iid}}{\sim} \sigma$, then by defining stochastic score functions $Z_i: (0,1) \times \Omega \to \mathbb{R}$ for all $i \in \{1,2,\dots, d\}$ such that
            \begin{align*}
                Z_i(x_i; \omega) := \hat{\sigma}^{-1}(x_i) +\epsilon_i(\omega) 
            \end{align*}
            for all $x \in (0,1)^d$, $\omega \in \Omega$, a stochastic key function $K: (0,1)^d \times \Omega \to \{0,1\}^d$ such that
            \begin{align*}
                K(x; \omega) := \cc{\mathbf{1}_{Z_i(x_i, \omega) \ge 0}}_{i=1}^d
            \end{align*}
            for all $x \in (0,1)^d$, $\omega \in \Omega$, and a function $V: (0,1)^d \times \Omega \times \mathbb{R}^{\{0,1\}^d} \times \mathbb{R}$ such that
            \begin{align*}
                V(x; \omega, Q) := Q(K(x; \omega)) \prod_{i=1}^d f(\abs{Z_i(x_i; \omega)})
            \end{align*}
            for all $x \in (0,1)^d$, $\omega \in \Omega$, and $Q: \{0,1\}^d \to \mathbb{R}$, then $V$ is a single-queried gradientable stochastic evaluation function.

            Moreover, we have that $G: (0,1)^d \times \Omega \times \mathbb{R}^{\{0,1\}^d} \times \mathbb{R}^d$ defined such that, for all $x \in (0,1)^d$, $\omega \in \Omega$, and $Q: \{0,1\}^d \to \mathbb{R}$,
            \begin{align*}
                G(x; \omega, Q) := 
                Q(K(x; \omega)) 
                \nabla_x \cc{\prod_{i=1}^d f(\abs{Z_j(x_j; \omega)})}
            \end{align*}
            will be a single-queried stochastic gradient function.\footnote{Note that to compute $\nabla_x \cc{\prod_{j=1}^d f(\abs{Z_j(x_j; \omega)})}$, we can do it using autodifferentiation, or a mathematical fact that $\frac{\partial}{\partial x_i}f(\abs{Z_i(x_i; \omega)}) = \frac{
                        \mathbf{1}_{Z_i(x_i; \omega)> 0}f'\oo{\abs{Z_i(x_i; \omega)}}-\mathbf{1}_{Z_i(x_i; \omega)< 0}f'\oo{\abs{Z_i(x_i; \omega)}}
                    }
                    {
                        \hat{\sigma}'\oo{\hat{\sigma}^{-1}(x_i)}
                    }$.} 

            We denote such single-queried stochastic evaluation function $V$ and gradient function $G$ as an easy stochastic evaluation function and an easy stochastic gradient function, respectively.
        \end{theorem}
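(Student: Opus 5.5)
The proof has two parts. First we show that $V$ is a single-queried stochastic evaluation function. Then we show that it is gradientable, which by Corollary~\ref{corollary:sgradfromdiffsval} gives the claim about $G$.

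\textbf{Unbiasedness of $V$.} We verify the criterion of Proposition~\ref{proposition:seval_characterization} with $f_{\text{v}}(x,\omega,q)=q\prod_i f(\abs{Z_i})$.

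Since the $\epsilon_i$ are iid, both the indicator $\mathbf{1}_{K(x;\omega)=y}$ and the weight factor over coordinates. Independence then gives
\begin{align*}
\expect{f_{\text{v}}\,\mathbf{1}_{K=y}} = q\prod_{i=1}^d \expect{f(\abs{Z_i})\mathbf{1}_{\{Z_i\ge 0\}=y_i}}.
\end{align*}
So it suffices to prove two one-dimensional identities.

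For $y_i=1$, note that $f(\abs{z})\mathbf{1}_{z\ge0}=f(z)$ because $f$ vanishes on $(-\infty,0]$. Hence the factor equals $\mexpect{\epsilon\sim\sigma}{f(\hat\sigma^{-1}(x_i)+\epsilon)}=x_i$ by the good-tuple identity.

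For $y_i=0$, write $f(\abs{z})\mathbf{1}_{z<0}=f(-z)$ and use the symmetry of $\sigma$ and of $\hat\sigma$. Symmetry gives $-\epsilon\overset{d}{=}\epsilon$ and $-\hat\sigma^{-1}(x_i)=\hat\sigma^{-1}(1-x_i)$. The factor is therefore $\mexpect{}{f(\hat\sigma^{-1}(1-x_i)+\epsilon)}=1-x_i$.

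Taking the product gives exactly $q\prod_i[x_iy_i+(1-x_i)(1-y_i)]$.

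\textbf{Pathwise differentiability.} Fix $x$. Then $Z_i$ depends on $x_i$ only through $\hat\sigma^{-1}$, which is differentiable by positive differentiability of $\hat\sigma$, with derivative $1/\hat\sigma'(\hat\sigma^{-1}(x_i))$.

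The map $z\mapsto f(\abs{z})$ is differentiable with the derivative stated in the footnote, at every $z$ with $\abs{z}\in A$. At $z=0$ we use $f(0)=0$ together with $f\equiv0$ on the left.

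The condition $\sigma(A+\{z\})=1$, applied at $z=\hat\sigma^{-1}(x_i)$, should give $\abs{Z_i}\in A$ almost surely. This uses symmetry again to handle the negative branch; I expect to need to interpret $A$ as closed under the reflection used here.

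The key $K(x;\omega)$ is locally constant in $x$ wherever all $Z_i\neq0$. At $Z_i=0$ the factor $f(\abs{Z_i})$ and its derivative both vanish, so the whole product has zero derivative there. In either case the product rule yields $\nabla_x V=Q(K)\nabla_x\prod_i f(\abs{Z_i})$ almost surely, which is the formula defining $G$.

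\textbf{Interchange of derivative and expectation (main obstacle).} Because of the product structure and independence, $\expect{\partial_{x_i}V}$ factorizes. The factors for $j\ne i$ are the one-dimensional expectations already computed, so it is enough to show, in one dimension,
\begin{align*}
\frac{d}{dx}\expect{F(x,\epsilon)}=\expect{\partial_x F(x,\epsilon)},
\end{align*}
where $F(x,\epsilon)=f(\abs{\hat\sigma^{-1}(x)+\epsilon})\mathbf{1}_{\{\cdot\}=y}$. By the first part the left side is $\pm1$.

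I would prove this through the integral form recalled in Section~\ref{sec:related}. For each fixed $\epsilon$, $F(\cdot,\epsilon)$ is absolutely continuous in $x$: $f$ is absolutely continuous, the thresholding introduces no jump because $f(0)=0$, and $\hat\sigma^{-1}$ is smooth on $(0,1)$. Hence $F(b,\epsilon)-F(a,\epsilon)=\int_a^b\partial_xF\,dx$.

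Next, boundedness of $f'$ on $A$ together with local boundedness of $1/\hat\sigma'(\hat\sigma^{-1}(\cdot))$ on compact subintervals gives an integrable bound on $\partial_x F$. Fubini then swaps $\mathbb{E}$ and $\int_a^b$.

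This shows that $\expect{F(x,\epsilon)}$ is the integral of $\expect{\partial_xF}$. Continuity of that integrand in $x$, which follows from dominated convergence and the almost-sure pathwise derivative, finishes the argument via the fundamental theorem of calculus. The delicate point is justifying the uniform bound on the exceptional set where $\abs{Z}\notin A$. Since that set is null for each fixed $x$, I would apply Fubini on the product measure $dx\otimes\sigma$ to show it is null jointly.

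Finally, gradientability together with Corollary~\ref{corollary:sgradfromdiffsval} yields that $G$ is a single-queried stochastic gradient function.
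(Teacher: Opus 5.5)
Your unbiasedness argument is the paper's: Proposition~\ref{proposition:seval_characterization}, coordinatewise factorization, $f\equiv 0$ on $(-\infty,0]$, and the two symmetries. Your reading of the $A$-condition as $\mathbb{P}(\abs{z+\epsilon}\in A)=1$ is also what the paper's proof tacitly uses. Your gradientability strategy (absolute continuity, Fubini, domination) mirrors the paper's Lemma~\ref{lemma:gradientswap}, and the one-dimensional reduction via independence is a legitimate simplification.

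\textbf{The gap is in how you close the interchange.} You justify the last step by continuity of $x\mapsto\mathbb{E}[\partial_xF(x,\epsilon)]$, ``which follows from dominated convergence and the almost-sure pathwise derivative.'' It does not follow.
\begin{itemize}
\item Dominated convergence would need $\partial_xF(x_n,\epsilon)\to\partial_xF(x,\epsilon)$ almost surely. That means continuity of $f'$ at the relevant points and of $(\hat\sigma^{-1})'=1/\hat\sigma'(\hat\sigma^{-1}(\cdot))$, and neither is assumed. Existence of a derivative at each point says nothing about its continuity.
\item Without continuity, the integrated identity $\mathbb{E}F(b,\epsilon)-\mathbb{E}F(a,\epsilon)=\int_a^b\mathbb{E}[\partial_xF]\,dx$ only gives $\mathbb{E}[\partial_xF(x,\epsilon)]=\pm1$ for Lebesgue-almost every $x$. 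Definition~\ref{def:diffseval} requires it at every $x$.
\end{itemize}

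\textbf{Your domination also uses an unavailable assumption.} You invoke local boundedness of $1/\hat\sigma'(\hat\sigma^{-1}(\cdot))$ on compacts, which positive differentiability does not provide. $\hat\sigma'$ is only pointwise positive, and a positive derivative need not be bounded away from $0$ on a compact interval. The paper avoids this by applying Lemma~\ref{lemma:whengradient} with the encoding $e=\hat\sigma^{-1}$. In the coordinate $z=\hat\sigma^{-1}(x)$, each factor has derivative $\pm f'(\abs{z+\epsilon})$, bounded by $M:=\sup_A\abs{f'}$. The factor $(\hat\sigma^{-1})'(x)$ then enters only afterwards, as a deterministic Jacobian.

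\textbf{The repair} is to drop the integrated identity and apply dominated convergence to difference quotients at the fixed point, working in $z$.
\begin{itemize}
\item For almost every $\epsilon$, the map $z\mapsto f(\abs{z+\epsilon})\mathbf{1}_{\mathbf{1}_{z+\epsilon\ge0}=y}$ is absolutely continuous with derivative bounded by $M$ at almost every $z$. This is exactly where your joint-null-set Fubini argument on $dz\otimes\sigma$ is needed.
\item Hence its difference quotients are bounded by $M$. At the given $z$ they converge almost surely to the pathwise derivative, so $\frac{d}{dz}\mathbb{E}=\mathbb{E}\,\partial_z$ at every $z$.
\item The chain rule through the differentiable map $\hat\sigma^{-1}$ then transfers this to every $x$.
\end{itemize}

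The paper's Lemma~\ref{lemma:gradientswap} contains the same tacit continuity step: it replaces $\lim_{h\to0}\nabla_xV(x+the,\omega)$ by $\nabla_xV(x,\omega)$. The paper mentions ``difference quotient domination'' only in passing, but it is what actually closes both proofs.
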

        Since we only use $\sigma$ for sampling, this theorem also tells us why we impose less condition over $\sigma$ comparing to $\hat{\sigma}$ in the definition~\ref{def:goodtuple}. We will also see that the continuity imposed in the definition~\ref{def:goodtuple} is crucial. Otherwise, we can use $f(z) := \mathbf{1}_{z \ge 0}$ and $\sigma = \hat{\sigma}$, and revert back to the case when $V(x; \omega) = Q(K(x; \omega))$ with $K(x) \sim P_x$ by the lemma~\ref{lemma:calibrated}.

    \subsection{Examples}
    \label{subsection:examples}

        Below we provide some examples of good tuples.

        For example, it is easy to construct one when $\sigma = \text{Unif}\oo{\cc{-\frac{1}{2}, \frac{1}{2}}}$.

        \begin{example}
            [Spike]
            Consider
            \begin{align*}
                f(z) := \begin{cases}
                    0 &\text{ if } z\le 0\\
                    4z &\text{ if } 0 \le z \le \frac{1}{2}\\
                    4(1-z) &\text{ if } \frac{1}{2} \le z \le 1\\
                    0 &\text{ if } z\ge 1
                \end{cases}
            \end{align*}
            for all $z \in \mathbb{R}$, and $\sigma := \text{Unif}\oo{\cc{-\frac{1}{2}, \frac{1}{2}}}$. Thus, for all $z \in \cc{-\frac{1}{2}, \frac{1}{2}}$,
            \begin{align*}
                \hat{\sigma}(z)
                =
                \begin{cases}
                    2\oo{\frac{1}{2}-\abs{x}}^2 &\text{ if } z \in \cc{-\frac{1}{2}, 0}\\
                    1-2\oo{\frac{1}{2}-\abs{x}}^2 &\text{ if } z \in \cc{0,\frac{1}{2}}
                \end{cases}.
            \end{align*}
        \end{example}

        \begin{example}
            [Arch]
            Consider
            \begin{align*}
                f(z) := \begin{cases}
                    0 &\text{ if } z\le 0\\
                    \frac{\pi}{2}\sin(\pi z) &\text{ if } z \ge 0
                \end{cases}
            \end{align*}
            for all $z \in \mathbb{R}$, and $\sigma := \text{Unif}\oo{\cc{-\frac{1}{2}, \frac{1}{2}}}$. Thus, for all $z \in \cc{-\frac{1}{2}, \frac{1}{2}}$,
            \begin{align*}
                \hat{\sigma}(z)
                =
                \frac{1+\sin(\pi z)}{2}.
            \end{align*}
        \end{example}
        Note that $f$ is not differentiable only at $0$. However, we can also find $f$ that is differentiable everywhere.

        \begin{example}
            [Cosine]
            Consider
            \begin{align*}
                f(z) := \begin{cases}
                    0 &\text{ if } z\le 0\\
                    1-\cos(2\pi z) &\text{ if } z \ge 0
                \end{cases}
            \end{align*}
            for all $z \in \mathbb{R}$, and $\sigma := \text{Unif}\oo{\cc{-\frac{1}{2}, \frac{1}{2}}}$. Thus, for all $z \in \cc{-\frac{1}{2}, \frac{1}{2}}$,
            \begin{align*}
                \hat{\sigma}(z)
                =
                z + \frac{1}{2} + \frac{\sin(2\pi z)}{2\pi}.
            \end{align*}
        \end{example}

        We can see that this is not specific to bounded distribution.
        \begin{example}
            [Bi-Gaussian with Cosine]
            Consider
            \begin{align*}
                f(z) := \begin{cases}
                    0 &\text{ if } z\le 0\\
                    1-\cos\oo{\frac{z}{2}} &\text{ if } z \ge 0
                \end{cases}
            \end{align*}
            for all $z \in \mathbb{R}$, and $\sigma := \frac{1}{2}\NormRV{\pi}{1} + \frac{1}{2}\NormRV{-\pi}{1}$. It is hard to find a closed form for $\hat{\sigma}$, but we can show that $\hat{\sigma}$ is an increasing everywhere CDF is symmetric.
        \end{example}

        So far, we have only consider a bounded positively-differentiable $\sigma$. However, we can also easy example, which is the easiest among the examples, where $\sigma = \text{Unif}\oo{\{-1,1\}}$.
        \begin{example}
            [Long Jump]
            Consider
            \begin{align*}
                f(z) := 2z-1
            \end{align*}
            for all $z \in \cc{\frac{1}{2}, \frac{3}{2}}$, and $\sigma := \text{Unif}\oo{\{-1,1\}}$. Thus, we will have that $\hat{\sigma} = \text{Unif}\oo{\cc{-\frac{1}{2}, \frac{1}{2}}}$.
        \end{example}
    
\section{Calibration}
\label{section:desiderata}

    \subsection{Easy Stochastic Gradient is not Calibrated}
    
        A single-queried stochastic evaluation function $V$ and a single-queried stochastic gradient function $G$ requires a single query through the stochastic key $K$. 
    
        A possible conjecture as well as desiderata in some scenario is that $K(x) \sim P_x$, meaning that we check the query in the same way as the nature would have done.
    
        \begin{definition}
            (Calibrated Key)
            With respect to $\oo{\Omega, \mathcal{F}, \mathbb{P}}$,
            a single-queried stochastic evaluation function $V$ is calibrated if there is a key function $K$ via the definition~\ref{def:seval}, and
            \begin{align*}
                K(x) \sim P_x
            \end{align*}
            for all $x \in (0,1)^d$.
            Similarly, a single-queried stochastic gradient function $G$ is calibrated if there is a key function $K$ via the definition~\ref{def:sgrad}, and $K(x) \sim P_x$ for all $x \in (0,1)^d$.
        \end{definition}

        Thus, the stochastic evaluation of a form $V(x; \omega) = Q(K(x;\omega))$ is always calibrated. However, so far, we have not established whether there exists a calibrated single-queried gradientable stochastic evaluation function. To explore this, we start by consider a class of easy stochastic evaluation function as introduced in the theorem~\ref{theorem:easy}. We have that whether such evaluation is calibrated can be simply tested from the underlying good tuple $\oo{f, \sigma, \hat{\sigma}}$.
    
        \begin{corollary}
        \label{corollary:sigsig}
            A good stochastic evaluation function $V$ with a good tuple $\oo{f, \sigma, \hat{\sigma}}$ is calibrated if and only if $\sigma = \hat{\sigma}$.
        \end{corollary}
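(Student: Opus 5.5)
The plan is to reduce calibration to a one-dimensional statement and then compare two CDFs. In the construction of theorem~\ref{theorem:easy}, the key is $K(x;\omega) = \cc{\mathbf{1}_{\hat{\sigma}^{-1}(x_i)+\epsilon_i \ge 0}}_{i=1}^d$ with $\epsilon_i$ i.i.d. Hence the coordinates of $K(x)$ are independent Bernoulli variables. So $K(x) \sim P_x$ holds if and only if, for every $i$ and every $x_i \in (0,1)$,
\begin{align*}
    \prob{\hat{\sigma}^{-1}(x_i) + \epsilon_i \ge 0} = x_i .
\end{align*}
Because $P_x$ is a product measure and independence is automatic, the multivariate question collapses to this scalar identity.

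Next I rewrite the probability using the symmetry of $\sigma$. Writing $\sigma$ also for the CDF, as the paper does, we have $\prob{\epsilon \ge -t} = \prob{-\epsilon \le t} = \prob{\epsilon \le t} = \sigma(t)$. So the condition becomes $\sigma\oo{\hat{\sigma}^{-1}(x)} = x$ for all $x \in (0,1)$.

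\textbf{Sufficiency.} If $\sigma = \hat{\sigma}$, this is just $\hat{\sigma}(\hat{\sigma}^{-1}(x)) = x$, which holds because $\hat{\sigma}$ is continuous and strictly increasing on the interior of its support (positively differentiable). Equivalently, it is lemma~\ref{lemma:calibrated}.

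\textbf{Necessity.} Suppose $\sigma(t) = \hat{\sigma}(t)$ holds for every $t$ in the range of $\hat{\sigma}^{-1}$ on $(0,1)$, namely $\interior\oo{\supp\oo{\hat{\sigma}}}$. I then extend the equality to all of $\mathbb{R}$. Let $\interior\oo{\supp\oo{\hat{\sigma}}} = (a,b)$, an interval since $\hat{\sigma}$ is strictly increasing there, and symmetric, so $a=-b$. On $(a,b)$ the two CDFs agree. As $t \downarrow a$ we get $\sigma(t) = \hat{\sigma}(t) \to 0$, so $\sigma(a) = 0$ by right-continuity, and $\sigma(t)=0$ for $t<a$ by monotonicity. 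Likewise $\sigma(t) \to 1$ as $t \uparrow b$ forces $\sigma(t) = 1$ for $t \ge b$. Hence $\sigma$ and $\hat{\sigma}$ agree everywhere, and as distributions $\sigma = \hat{\sigma}$.

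I expect the main obstacle to be pinning down what "calibrated" quantifies over. The definition only asks that there exist some key function $K$ compatible with $V$, not necessarily the canonical one. I would argue that the statement refers to the good stochastic evaluation function together with its defining key from theorem~\ref{theorem:easy}, and I would state this reading explicitly. As a sanity check, the Long Jump example shows the claim is not vacuous: there $\sigma = \text{Unif}\oo{\{-1,1\}}$ but $\hat{\sigma}$ is uniform, and the key is deterministic, hence uncalibrated. A secondary technical point is the boundary case $\epsilon_i = -\hat{\sigma}^{-1}(x_i)$ when $\sigma$ has atoms. It is handled because the event $\{\epsilon \ge -t\}$ is closed, so the symmetric-CDF identity uses the right-continuous version consistently.
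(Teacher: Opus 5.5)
Your proof is correct and follows the paper's route: the paper simply invokes Lemma~\ref{lemma:calibrated} for the key of Theorem~\ref{theorem:easy}, and your chain (independence of coordinates, then symmetry to reduce to $\sigma(\hat{\sigma}^{-1}(x)) = x$, then extending the CDF equality beyond $\interior(\supp(\hat{\sigma}))$) is exactly the content of that lemma and of Lemma~\ref{lemma:inverseinverseCDF}, with the sign flip $-\epsilon_i \overset{d}{=} \epsilon_i$ (which the paper leaves implicit) made explicit, and with the same canonical-key reading of ``calibrated'' that the paper uses. One slip in your sanity check: in Long Jump the key is $\mathbf{1}_{\epsilon_i = 1}$, i.e.\ $\text{Bernoulli}(1/2)$ for every $x$, so it is random rather than deterministic; it is uncalibrated because its law does not depend on $x$.
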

        \begin{proof}
            This comes directly from the lemma~\ref{lemma:calibrated}, and how the key function is determined in the theorem~\ref{theorem:easy}.
        \end{proof}
        Note that every example of easy stochastic evaluation function we have given in the subsection~\ref{subsection:examples} does not satisfies $\sigma = \hat{\sigma}$. Moreover, we can even show that a good tuple with $\sigma = \hat{\sigma}$ does not exist as long as we enforce compact support or decreasing hazard rate.\footnote{We do conjecture that the result also holds for general symmetric distribution, including heavy tail case. However, in practice, we rarely use heavy tail for the encoding.}
        \begin{lemma}
        \label{lemma:no_fsigsig}
            There is no good tuple $\oo{f, \sigma, \hat{\sigma}}$ with $\sigma = \hat{\sigma}$ with compact support or with decreasing hazard rate.
        \end{lemma}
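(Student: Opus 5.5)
The plan is to rewrite the good-tuple identity in terms of the tail of $\sigma$, and then treat the two hypotheses separately. The compact-support case gives a contradiction near the edge of the support. The decreasing-hazard-rate case will turn out to be vacuous for a symmetric law with unbounded support, so it falls back on the compact case.

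\textbf{Reformulation.} Suppose $(f,\sigma,\hat\sigma)$ is a good tuple with $\sigma=\hat\sigma$, and write $S(t):=\sigma\oo{(t,\infty)}$.
\begin{itemize}
\item Put $z=\hat\sigma^{-1}(x)$ in the defining identity of the definition~\ref{def:goodtuple}. Since $f$ vanishes on $(-\infty,0]$, we get $\mexpect{\epsilon\sim\sigma}{f(z+\epsilon)\mathbf{1}_{\epsilon>-z}}=\sigma\oo{(-\infty,z]}$ for every $z\in\interior\oo{\supp\oo{\sigma}}$.
\item Set $t=-z$ and use the symmetry of $\sigma$, so that $\sigma\oo{(-\infty,-t]}=S(t)$. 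This gives
\begin{align*}
\mexpect{\epsilon\sim\sigma}{f(\epsilon-t)\mathbf{1}_{\epsilon>t}} = S(t)
\end{align*}
for all $t$ in the interior of the support.
\item For such $t$ we have $S(t)>0$. Hence the conditional mean of $f$ evaluated at the residual $\epsilon-t$, given $\epsilon>t$, equals $1$ for every such $t$.
\end{itemize}

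\textbf{Compact support.} By symmetry the support is contained in $[-a,a]$, with $a=\sup\supp(\sigma)\in(0,\infty)$. Take $t\in(0,a)$ with $t\uparrow a$.
\begin{itemize}
\item Because $a$ is the supremum of the support, $S(t)>0$, so the conditional identity above applies.
\item On the event $\{\epsilon>t\}$ the residual satisfies $\epsilon-t\in(0,a-t]$.
\item $f$ is absolutely continuous, hence continuous, and $f(0)=0$. So $\sup_{u\in[0,a-t]}|f(u)|\to 0$ as $t\uparrow a$.
\item The conditional mean is therefore at most $\sup_{[0,a-t]}|f|<1$ for $t$ close to $a$, contradicting that it equals $1$.
\end{itemize}
The only care needed is to choose $t$ in the interior of the support, so that the identity holds there and $S(t)>0$.

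\textbf{Decreasing hazard rate.} Here $\sigma$ has a density $p=\sigma'$, as required for a hazard rate to be defined, and the hazard rate $h=p/S$ is nonincreasing on the interior of the support.
\begin{itemize}
\item If the support is bounded, then by symmetry it is compact, and the previous case applies.
\item Otherwise, by symmetry the support is unbounded below, so $h$ is defined and nonincreasing on all of $\mathbb{R}$.
\item Since $p$ is integrable on $(-\infty,0)$, we have $\liminf_{t\to-\infty}p(t)=0$. Also $S(t)\to 1$ as $t\to-\infty$. Hence $h(t_n)\to 0$ along some sequence $t_n\to-\infty$.
\item Monotonicity then forces $h(t)\le h(t_n)$ for all $t_n\le t$, so $h\equiv 0$. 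This gives $p=hS=0$ almost everywhere, contradicting $\int p=1$.
\end{itemize}
Thus no symmetric law with decreasing hazard rate has unbounded support, and this case reduces to the compact one.

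\textbf{Main obstacle.} I expect the only real difficulty to be in the decreasing-hazard case. One must fix the precise meaning of ``decreasing hazard rate'', namely nonincreasing on the interior of the support with a density there. One must also pass to the limit $t\to-\infty$ carefully, through a subsequence of $p$ rather than assuming $p(t)\to 0$. The compact-support argument is routine once the tail reformulation is in place.
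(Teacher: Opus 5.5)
Your compact-support argument is the paper's own. The paper writes the identity as $\oo{(\bar f-f)\ast\sigma'}(-z)=0$ with $\bar f=\mathbf{1}_{z\ge 0}$, takes $z=C-\epsilon$ so that only residuals in $[0,\epsilon]$ contribute, and uses $f<\tfrac12$ there. Your version says the conditional mean of $f(\epsilon-t)$ given $\epsilon>t$ must equal $1$, yet $\abs{f}$ is small on $(0,a-t]$. That is the same estimate, stated more cleanly.

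Both arguments rely on the support being a closed interval. You should cite this explicitly; it follows from well-invertibility of $\hat\sigma=\sigma$. You need it when you let $t\uparrow a$ inside $\interior\oo{\supp(\sigma)}$, and again when you conclude that an unbounded support is all of $\mathbb{R}$.

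The genuine difference is the decreasing-hazard case, for which the paper offers only ``almost identical.'' You show instead that the hypothesis forces bounded support. Since $S\to 1$ and $p$ is integrable, $h(t_n)\to 0$ along some $t_n\to-\infty$, so a nonincreasing hazard would vanish identically. The case then becomes a corollary of the compact one with no new estimate. Under the standard meaning of the term this is correct, and it gives full rigor where the paper gives none.

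Note, though, that under this reading the clause is vacuous altogether. If the support were bounded above by $a$, monotonicity would give $S(t)\ge S(t_0)e^{-h(t_0)(t-t_0)}$ on $(t_0,a)$. That forces $\sigma(\{a\})>0$, contradicting non-atomicity. So you prove the statement as written by showing its second hypothesis is never met.

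The author may have intended a genuine tail condition. The footnote excluding heavy tails suggests a light-tail assumption, e.g.\ a hazard rate growing in the tail so that the residual $\epsilon-t$ given $\epsilon>t$ concentrates near $0$ as $t\to\infty$. Your reduction would not cover that reading, and a real tail analogue of the edge argument would be needed.
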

        The main intuition is that, because $\oo{f \ast \sigma'}(z) = \sigma(z)$, then with enough regularity,
        \begin{align*}
            \oo{f' \ast \sigma'}(z) = \sigma'(z),
        \end{align*}
        suggesting that $f'$ should be Dirac delta like, contradicting with that $f$ is absolutely continuous.
    
        \begin{proposition}
            There is no calibrated easy stochastic evaluation function, and no calibrated easy stochastic gradient function if the encoding function comes from compactly supported distribution or a distribution with decreasing hazard rate.
        \end{proposition}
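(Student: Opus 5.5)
The proposition is essentially a combination of Corollary~\ref{corollary:sigsig} and Lemma~\ref{lemma:no_fsigsig}, with one extra step to transfer the statement from evaluation functions to gradient functions.

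\textbf{Evaluation functions.} Suppose, for contradiction, that there is a calibrated easy stochastic evaluation function $V$, built as in Theorem~\ref{theorem:easy} from a good tuple $\oo{f, \sigma, \hat{\sigma}}$. Here $\hat{\sigma}$, the encoding distribution, is either compactly supported or has decreasing hazard rate.

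By Corollary~\ref{corollary:sigsig}, calibration forces $\sigma = \hat{\sigma}$. So $\oo{f, \hat{\sigma}, \hat{\sigma}}$ is a good tuple in which $\sigma = \hat{\sigma}$ has compact support or decreasing hazard rate. Lemma~\ref{lemma:no_fsigsig} excludes exactly this. Hence no such $V$ exists.

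One point needs care. Calibration only asks for \emph{some} key function $K$ with $K(x)\sim P_x$, not necessarily the key constructed in Theorem~\ref{theorem:easy}. I would therefore read ``easy'' as fixing $K$ to be the threshold key $\cc{\mathbf{1}_{\hat{\sigma}^{-1}(x_i)+\epsilon_i \ge 0}}_{i}$ from that theorem. This is the reading under which Corollary~\ref{corollary:sigsig} was stated. Under it, $K_i(x)=1$ with probability $\sigma\oo{[-\hat{\sigma}^{-1}(x_i),\infty)} = 1-\sigma\oo{(-\infty,-\hat{\sigma}^{-1}(x_i))}$. By symmetry of $\sigma$, this equals $\sigma\oo{(-\infty,\hat{\sigma}^{-1}(x_i)]}$. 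Requiring this to equal $x_i = \hat{\sigma}\oo{\hat{\sigma}^{-1}(x_i)}$ for all $x_i\in(0,1)$ identifies the CDFs on the range of $\hat{\sigma}^{-1}$, which is where Lemma~\ref{lemma:calibrated} gives $\sigma=\hat{\sigma}$.

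\textbf{Gradient functions.} An easy stochastic gradient function $G$ in Theorem~\ref{theorem:easy} uses the same key $K(x;\omega)$ as the associated $V$. It is calibrated iff that $K$ satisfies $K(x)\sim P_x$, which is the same condition as for $V$. So the same chain, Corollary~\ref{corollary:sigsig} giving $\sigma=\hat{\sigma}$ and then Lemma~\ref{lemma:no_fsigsig} giving a contradiction, applies verbatim.

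\textbf{Main obstacle.} Almost all of the difficulty sits in Lemma~\ref{lemma:no_fsigsig}, which I take as given. Within this proof, the only delicate step is making sure ``calibrated'' refers to the theorem's own key rather than an arbitrary alternative key. If an alternative key were allowed, one could in principle pick a different $K$ and the statement would need a separate argument. I would resolve this by stating explicitly that the key of an easy function is the one specified in Theorem~\ref{theorem:easy}.
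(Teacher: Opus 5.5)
Your proposal is correct and follows the paper's proof, which is exactly the one-line combination of Corollary~\ref{corollary:sigsig} (calibration of the threshold key forces $\sigma=\hat{\sigma}$) with Lemma~\ref{lemma:no_fsigsig}. Your explicit convention that the key is the one constructed in Theorem~\ref{theorem:easy}, and your transfer to the gradient function via that shared key, spell out what the paper leaves implicit.
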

        \begin{proof}
            The proof comes from the corollary~\ref{corollary:sigsig} and the lemma~\ref{lemma:no_fsigsig}.
        \end{proof}

        This shows that we cannot get unbiased, which is inherently in the definition, calibrated, and easy stochastic evaluation function, at least upto regularity on the distribution. 

        However, we know that there exists an unbiased and calibrated single-queried gradientable stochastic evaluation function. For example, REINFORCE algorithm gives a calibrated single-queried stochastic gradient, but it cannot be written as a gradient of a gradientable evaluation.

    \subsection{Variance, Calibration, \& Importance Sampling}

        Although the vanilla ESG is not calibrated, we can use importance sampling in order to calibrate it. For example, REINFORCE gradient can be viewed as an importance sampling of the Easy Stochastic Gradient (ESG)~\ref{alg:ESG} with good tuples $\oo{f, \hat{\sigma}, \sigma}$ called ``Long Jump" in the subsection~\ref{subsection:examples}, and vice versa. However, it is unclear whether we should always use the calibrated version.

        For example, in a one-dimensional case, if $Q(0)=0$ and $Q(1)=1$, we will have that the ground truth gradient $g(x)=1$. Both REINFORCE gradient and ESG: LongJump are unbiased. However, the variance of REINFORCE gradient is $\frac{1-x}{x}$, which explodes to $\infty$ when $x \to 0$, while the gradient obtained via the ESG algorithm is $1$.\footnote{The use of pathwise gradient may be responsible for the variance reduction in some regime.~\cite{asmussen2007stochastic}}

        Another use of importance sampling is also to reduce the variance of the step size. However, importance sampling normally employs some change of measure which is $x$ dependent, so we need to add additional term to the stochastic gradient computation. An example can be seen in the appendix~\ref{appendix:ESGREINFORCE}.

\section{Stochastic Query Descent}

    Now, we introduce and test stochastic query descent (SQD) in the algorithm~\ref{alg:SQD}, which is similar to a vanilla gradient descent but with Easy Stochastic Gradient (ESG) or other unbiased single-queried gradient used in place of the actual gradient.

    \begin{algorithm}[tb]
    \caption{Single Query Descent (SQD)}
    \label{alg:SQD}
    \begin{algorithmic}
        \STATE {\bfseries Input:} initial state $x^{(0)}$, learning rate schedule $\eta$, time $T$, oracle $Q(\cdot)$, and EasyStochasticGradient $\text{ESG}\oo{\cdot:\cdot}$
        \FOR{$t \in {1,2,\dots, T}$}
            \STATE $g^{(t)} \gets \text{ESG}\oo{x^{(t)};Q(\cdot)}$;
            \STATE Update $x^{(t+1)} \gets x^{(t)} - \eta_t g^{(t)}$.
        \ENDFOR
        \STATE {\bfseries Return:}
        $x_T$.
    \end{algorithmic}
    \end{algorithm}

    We provide some experiments as a proof-of-concept.

    \subsection{Experiment: Symmetric-Slice Optimization}
    \label{subsection:symmetricslice}

        In this experiment, we consider the following combinatorial maximization\footnote{Thus, the algorithm needs to be an ascent algorithm instead of descent algorithm.} problem, where the payoff $Q(\cdot)$ only depends on the hamming weight
        \begin{align*}
            S(y) = \sum_{i=1}^d y_i
        \end{align*}
        of the assignment $y \in \{0,1\}^d$. We define the oracle
        \begin{align*}
            Q(y)
            =
            \begin{cases}
                3&\text{ if }S(y)=d\\
                18&\text{ if }\abs{S(y)-\floor{\frac{d}{2}}}\le \floor{0.133d}\\
                -2&\text{ if }S(y)\le \floor{0.233d}\\
                0&\text{ otherwise}
            \end{cases}
        \end{align*}
        for any $y \in \{0,1\}^d$. 
        
        This construction creates a broad, attractive plateau near $S\approx d/2$. For all configurations $y$ whose weights fall in the region, the oracle returns exactly the same value. Consequently, many samples produce identical feedback, which can induce wide attractive regions and make local-progress behavior easy to interpret. The construction also includes a rare-event spike at $y=\mathbf{1}$, which is when $S(y) =d$.

        We perform the experiments with $d \in \{10,20,30\}$.

        \paragraph{Preliminary Result}

        We compare the result of our newly introduced algorithm Single Query Descent (SQD) with $3$ variants, ``SQD: Arch", ``SQD: Spike", and ``SQD: LongJump", against the 4 baselines in the literature: ``REINFORCE", ``RELAX", ``ARM", and ``DisARM". The first three (SQDs) implements Easy Stochastic Gradient (ESG)~\ref{alg:ESG} with good tuples $\oo{f, \hat{\sigma}, \sigma}$ from the 3 examples with corresponding names in the subsection~\ref{subsection:examples}. These three algorithms are the one introduced in our paper.

        The trajectories\footnote{We measure time as a number of oracle calls. ``ARM" and ``DisARM", which will require 2 oracle queries per iteration. Others use 1 query per iteration.} of the algorithms tend to be noisy, so we keep track of the best-so-far instead. From the result\footnote{Results for $d=10$ and $d=20$ are shown in the appendix~\ref{appendix:result}.} shown in the figure~\ref{fig:slice30}, we can see that REINFORCE tends to be stuck at the local minima, yielding the value of only $3$. ARM and DisARM tends to have the same problem. Our SQD algorithms take some iterations before rapidly increase the objective to the global maximum. RELAX tends to perform better, but this requires an additional learning for the control variate.  

        \begin{figure}[ht]
          \begin{center}
            \centerline{\includegraphics[width=\columnwidth]{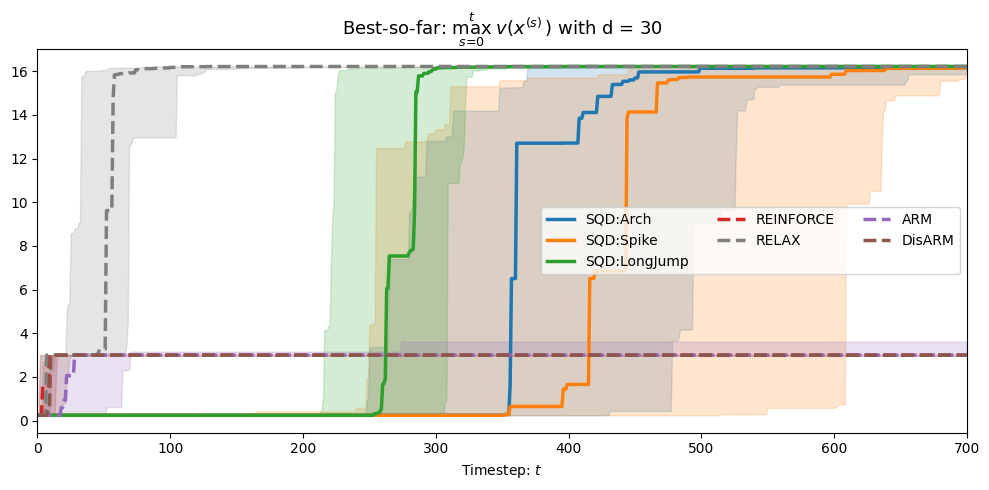}}
            \caption{
                Median of the best-so-far progress of different algorithms with inter-quartiles uncertainty. The algorithm we proposed are in bold lines, and the pre-existing algorithms are in dashed lines. This result comes from the case when the dimension $d= 30$.
            }
            \label{fig:slice30}
          \end{center}
        \end{figure}

    \subsection{Encoded ESG \& Encoded SQD}
    \label{subsection:ecnodedd}

        From our ESG algorithm~\ref{alg:ESG}, we can see that the computation will only use the information of the state $x \in (0,1)^d$ through the embedding $e \in \text{int}\oo{\text{support}\oo{\hat{\sigma}}}^d$. This suggests that, instead of converting the state $x$ to the embedding $e$ by applying $\oo{\hat{\sigma}}^{-1}$ in an elementwise manner, we can just perform the SQD dynamic directly on $e$ space and convert it back to $x$ after the iterations are done.

        These encoded versions\footnote{See the appendix~\ref{appendix:ecnodedd} for the encoded ESG algorithm~\ref{alg:EncodedESG} and the ecnoded SQD algorithm~\ref{alg:EncodedSQD}.} may reduce the autodifferentiation work load since the variable $z$ is linear in $e$. Moreover, the variance may be reduced when $x$ is close to the boundary since we do not require the division by $\hat{\sigma}'\oo{\hat{\sigma}^{-1}(x_i)}$. From the experiment\footnote{as shown in the appendix~\ref{appendix:result}}, we observe that the trajectory tends to be more gradual.
    
\section{Conclusion}

We proposed \textbf{Easy Stochastic Gradient (ESG)}, a new class of unbiased, single-queried gradient estimators for optimizing combinatorial objectives through their product-Bernoulli (multilinear) relaxation. The key idea is to construct a value estimator that is also pathwise differentiable, which reduce variance in some regime. Our framework unifies and connects to classical score-function methods (REINFORCE) via importance sampling, and it naturally generates a family of new estimators through the choice of underlying ``good tuples". In proof-of-concept experiments, \textbf{Single Query Descent (SQD)} instantiated with ESG achieved strong empirical performance relative to established unbiased estimators, while maintaining the critical property that each iteration requires only a single oracle evaluation.

Several directions remain open. First, developing \textbf{adaptive selection rules} that choose among unbiased estimators (or tune tuple families) based on the current state and observed history may further reduce variance and improve robustness across tasks. Second, extending the construction beyond product-Bernoulli structure could broaden applicability. Together, these results suggest that one-query, unbiased, autodiff-compatible optimization is a practical and promising avenue for large-scale combinatorial problems where oracle calls dominate cost.

\section*{Acknowledgements}

    The author thanks Rajat Dwaraknath, Haozhan Gao, Zijie Geng, Xuchen Gong, Lutong Hao, Senem Isik, Jiwon Jang, Chartsiri Jirachotkulthorn, Minseo Kim, Anish Senapati, James Spall, Ellen Vitercik, Jenny Xu, and Yunzhi Zhang for useful discussions.

\section*{Impact Statement}

    This paper presents work whose goal is to advance the field of Machine
    Learning. There are many potential societal consequences of our work, none
    which we feel must be specifically highlighted here.

\bibliography{bibliography}
\bibliographystyle{icml2026}

\newpage

\appendix
\onecolumn
\part*{Appendix}

\section{Encoded ESG \& Encoded SQD Algorithms}
\label{appendix:ecnodedd}
    Below, we provide algorithms for encoded ESG (algorithm~\ref{alg:EncodedESG}) and encoded SQD (algorithm~\ref{alg:EncodedSQD}) as mentioned in the subsection~\ref{subsection:ecnodedd}.

\begin{algorithm}[tb]
        \caption{Encoded Easy Stochastic Gradient (EncodedESG) via AutoGrad}
        \label{alg:EncodedESG}
        \begin{algorithmic}
            \STATE {\bfseries Input:} encoded state $e$, oracle $Q(\cdot)$
            \FOR{$i \in \{1,2,\dots,d\}$}
                \STATE Sample noise $\epsilon_i \overset{\text{iid}}{\sim} \sigma$;
                \STATE Compute stochastic score $z_i \gets e_i +\epsilon_i$;
                \STATE Compute stochastic query $k_i \gets \mathbf{1}_{z_i \ge 0}$;
                \STATE Compute $f_i \gets f\oo{\abs{z_i}}$
            \ENDFOR
            \STATE Query $q \gets Q(k)$;
            \STATE Compute stochastic value $v \gets q \times \prod_{i=1}^d f_i$;
            \STATE Compute stochastic gradient $g \gets \text{AutoGrad}(v, e)$.
            \STATE {\bfseries Return:}
            $(v,g)$
        \end{algorithmic}
        \end{algorithm}

\begin{algorithm}[tb]
        \caption{Encoded Single Query Descent (EncodedSQD)}
        \label{alg:EncodedSQD}
        \begin{algorithmic}
            \STATE {\bfseries Input:} initial state $x^{(0)}$, learning rate schedule $\eta$, time $T$, oracle $Q(\cdot)$, and EncodedEasyStochasticGradient $\text{EncodedESG}\oo{\cdot:\cdot}$
            \FOR{$i \in \{1,2,\dots,d\}$}
                \STATE Encode $e_i^{(0)} \gets \hat{\sigma}^{-1}\oo{x_i^{(0)}}$;
            \ENDFOR
            \FOR{$t \in {1,2,\dots, T}$}
                \STATE $g^{(t)} \gets \text{EncodedESG}\oo{e^{(t)};Q(\cdot)}$;
                \STATE Update $e^{(t+1)} \gets e^{(t)} - \eta_t g^{(t)}$.
            \ENDFOR
            \FOR{$i \in \{1,2,\dots,d\}$}
                \STATE Decode $x_i^{(T)} \gets \hat{\sigma}\oo{e_i^{(T)}}$;
            \ENDFOR
            \STATE {\bfseries Return:}
            $x^{(T)}$.
        \end{algorithmic}
        \end{algorithm}

\section{Bernoulli Reparametrization}

    These first two corollaries are straight-forward and explain the basic property of the function $v$.

    \begin{corollary}
    \label{corollary:lipchitz}
        The function $v: [0,1]^d \to \mathbb{R}$ is well-defined and Lipschitz continuous.
    \end{corollary}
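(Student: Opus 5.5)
Since $v(x) = \sum_{y\in\{0,1\}^d} Q(y)\prod_{i=1}^d \cc{x_iy_i + (1-x_i)(1-y_i)}$ is a finite sum, I will handle it term by term.

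First, well-definedness. The domain $\{0,1\}^d$ is finite, so every $Q$ is a finite vector of real numbers and the sum has finitely many terms. Each term is a real number times a product of $d$ affine functions of $x$. So $v(x)$ is a finite real number for every $x \in [0,1]^d$. Equivalently, $Y \sim P_x$ takes finitely many values, so $\mexpect{Y\sim P_x}{Q(Y)}$ always exists. In fact $v$ is a polynomial of degree at most one in each coordinate, i.e.\ multilinear.

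Second, Lipschitz continuity. Every polynomial is $C^1$ on $\mathbb{R}^d$, and $[0,1]^d$ is compact and convex. I will bound the gradient explicitly rather than appeal only to compactness. Write $p_y(x) := \prod_{i=1}^d \cc{x_iy_i + (1-x_i)(1-y_i)}$. Then
\begin{align*}
\frac{\partial p_y}{\partial x_j}(x) = (2y_j - 1)\prod_{i\neq j}\cc{x_iy_i + (1-x_i)(1-y_i)} .
\end{align*}
Each factor lies in $[0,1]$ on the cube, so $\abs{\partial_j p_y} \le 1$. Summing over $y$ gives $\abs{\partial_j v(x)} \le \sum_y \abs{Q(y)}$. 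A sharper bound follows from pairing $y$ with the $y$ flipped in coordinate $j$: this shows $\partial_j v(x) = \mexpect{}{Q(Y^{j\to 1}) - Q(Y^{j\to 0})}$, where $Y^{j\to b}$ denotes $Y$ with its $j$-th coordinate set to $b$. Hence $\abs{\partial_j v} \le 2\max_y\abs{Q(y)}$.

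The mean value theorem on the convex set $[0,1]^d$ then gives
\begin{align*}
\abs{v(x)-v(x')} \le \sup_{z}\norm{\nabla v(z)}_2\,\norm{x-x'}_2 \le 2\sqrt{d}\,\max_y\abs{Q(y)}\,\norm{x-x'}_2 .
\end{align*}

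There is no real obstacle here. The only point needing care is to state the constant uniformly, using $\max_y\abs{Q(y)}$, which is finite. The same polynomial observation also yields the infinite differentiability asserted in the companion corollary.
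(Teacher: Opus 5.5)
Your proof is correct, but it takes a different route from the paper's. The paper never differentiates. It notes that $\norm{\text{Bernoulli}(x'_i)-\text{Bernoulli}(x_i)}_{\text{TV}} = \abs{x'_i - x_i}$, uses subadditivity of total variation over product measures to get $\norm{P_{x'}-P_x}_{\text{TV}} \le \norm{x'-x}_1$, and concludes with $\abs{v(x')-v(x)} \le (\sup Q - \inf Q)\norm{P_{x'}-P_x}_{\text{TV}}$. You instead use the multilinear polynomial structure: you bound each partial derivative through the flip-pairing identity $\partial_j v(x) = \mathbb{E}[Q(Y^{j\to 1}) - Q(Y^{j\to 0})]$ and then apply the mean value theorem. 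The paper's coupling argument is derivative-free. It needs only that each marginal moves by $\abs{x'_i-x_i}$ in total variation and that $Q$ is bounded, so it would survive replacing the Bernoulli factors by any family whose marginals are Lipschitz in total variation. It also lands directly on the $\ell_1$ constant $\sup Q - \inf Q$. Your route is tied to the multilinear setting, but your pairing identity is exactly the first step of the paper's proof of Corollary~\ref{corollary:infdiff}, so one computation serves both corollaries. Your constant is slightly looser than it needs to be. The pairing identity actually gives $\norm{\nabla v}_\infty \le \max_y Q(y) - \min_y Q(y)$. Combining this with H\"older, $\abs{\nabla v(z)^\top (x-x')} \le \norm{\nabla v(z)}_\infty \norm{x-x'}_1$, rather than Cauchy--Schwarz recovers the paper's $\ell_1$ bound exactly, and that bound implies your $\ell_2$ statement.
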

    \begin{proof}
        First, we note that $\sup_{y \in \{0,1\}^d} \abs{Q(y)} < \infty$, so, for any $x \in [0,1]^d$,
        \begin{align*}
            \mexpect{Y \sim P_x}{\abs{Q(Y)}}
            \le
            \sup_{y \in \{0,1\}^d} \abs{Q(y)} < \infty,
        \end{align*}
        making $V$ well-defined and bounded.

        For any $x, x' \in [0,1]^d$, we have that, for all $i \in  \{1,2,\dots, d\}$,
        \begin{align*}
            \norm{\cc{P_{x'}}_i - \cc{P_x}_i}_{\text{TV}} = 
            \norm{\text{Bernoulli}(x'_i) - \text{Bernoulli}(x_i)}_{\text{TV}} = \abs{x'_i - x_i},
        \end{align*}
        where $\cc{\mu}_i$ denotes the marginal distribution of $\mu \in \Delta\oo{\cc{0,1}^d}$ on the $i^{\text{th}}$ element. From the independence, we will have that
        \begin{align*}
            P_x = \bigotimes_{i=1}^d \cc{P_x}_i,
        \end{align*}
        making
        \begin{align*}
            \norm{P_{x'} - P_x}_{\text{TV}} 
            =
            \norm{\bigotimes_{i=1}^d \cc{P_{x'}}_i - \bigotimes_{i=1}^d \cc{P_x}_i}_{\text{TV}}
            \le 
            \sum_{i=1}^d \norm{\cc{P_{x'}}_i - \cc{P_x}_i}_{\text{TV}} = \norm{x'-x}_1.
        \end{align*}
        Thus, we have that
        \begin{align*}
            \abs{v(x')-v(x)} \le \cc{\sup_{y \in \{0,1\}^d} Q(y) - \inf_{y \in \{0,1\}^d} Q(y)} \norm{P_{x'} - P_x}_{\text{TV}} \le \cc{\sup_{y \in \{0,1\}^d} Q(y) - \inf_{y \in \{0,1\}^d} Q(y)} \norm{x'-x}_1,
        \end{align*}
        so $v$ is Lipschitz continuous.
    \end{proof}

    \begin{corollary}
    \label{corollary:infdiff}
        The function $v$ is infinitely and uniformly bounded differentiable, meaning that there exists $K \in \mathbb{R}$ such that $\abs{\frac{\partial^{\abs{\alpha}}}{\partial x^\alpha} v(x)} < K$ for all $\alpha \in \oo{\mathbb{N}_0}^d$.
    \end{corollary}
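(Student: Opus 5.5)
The plan is to use the explicit multilinear formula
\begin{align*}
    v(x) = \sum_{y\in \{0,1\}^d} Q(y) \prod_{i=1}^d \cc{x_iy_i + (1-x_i)(1-y_i)}.
\end{align*}
This shows that $v$ is a polynomial in $x$ on $[0,1]^d$, because each factor $x_iy_i + (1-x_i)(1-y_i)$ is affine in $x_i$ (it equals $x_i$ if $y_i=1$ and $1-x_i$ if $y_i=0$). Since the sum is finite, $v$ is $C^\infty$. Taking derivatives up to the boundary is harmless: one can work with one-sided derivatives, or simply regard $v$ as the restriction to $[0,1]^d$ of the same polynomial defined on $\mathbb{R}^d$.

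\textbf{Multilinearity.} Each term is affine in each coordinate separately, so $v$ is multilinear. Consequently $\frac{\partial^2}{\partial x_i^2} v \equiv 0$ for every $i$. For a multi-index $\alpha$, the derivative $\frac{\partial^{\abs{\alpha}}}{\partial x^\alpha} v$ therefore vanishes whenever some $\alpha_i \ge 2$. Only the $2^d$ multi-indices $\alpha\in\{0,1\}^d$ can give nonzero derivatives, which turns the uniform bound over infinitely many $\alpha$ into a finite problem.

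\textbf{Bounding the surviving derivatives.} For $\alpha \in \{0,1\}^d$ with support $S$, differentiate each factor with $i\in S$. This replaces $x_i$ by $1$ and $1-x_i$ by $-1$, i.e., the factor becomes $(2y_i-1)\in\{\pm1\}$. Hence
\begin{align*}
    \frac{\partial^{\abs{\alpha}}}{\partial x^\alpha} v(x) = \sum_{y} Q(y)\prod_{i\in S}(2y_i-1)\prod_{i\notin S}\cc{x_iy_i+(1-x_i)(1-y_i)}.
\end{align*}
The factors with $i\notin S$ define the product-Bernoulli law on the coordinates outside $S$. Summing over $y_{S^c}$ for fixed $y_S$ therefore gives an average of $Q$ against a probability measure. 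So the derivative is bounded in absolute value by $2^{\abs{S}}\sup_y\abs{Q(y)} \le 2^d \sup_y \abs{Q(y)}$.

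Taking $K := 2^d \sup_{y}\abs{Q(y)} + 1$ gives the strict uniform bound for every $\alpha \in (\mathbb{N}_0)^d$. This bound is finite because $\{0,1\}^d$ is finite, as already used in corollary~\ref{corollary:lipchitz}. A cruder alternative bound also works: use $\sum_y \abs{Q(y)}$, since each differentiated product has absolute value at most $1$ on $[0,1]^d$.

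\textbf{Main obstacle.} There is no real difficulty here. The only point needing care is the uniformity over all $\alpha$, which the multilinearity observation resolves. A minor technical point is differentiability at the boundary of $[0,1]^d$, which the polynomial extension handles.
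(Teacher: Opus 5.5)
Your proposal is correct and takes essentially the same approach as the paper. Multilinearity kills every derivative with some $\alpha_i \ge 2$, and each surviving mixed partial is a signed sum of $2^{|S|}$ values of $v$ with the differentiated coordinates pinned to vertices, which gives the bound $2^d \sup_y \abs{Q(y)}$. Your $+1$ in $K$ is a small but welcome addition: it secures the strict inequality the statement asks for (including when $Q \equiv 0$), which the paper's $\le$ bound leaves implicit.
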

    \begin{proof}
        We have that
        \begin{align*}
            \frac{\partial}{\partial x_1} v(x) =
            \frac{\partial}{\partial x_1}
            \cc{
                x_1 v\oo{\oo{1, \cc{x_i}_{i=2}^{d}}}+
                \oo{1-x_1} v\oo{\oo{0, \cc{x_i}_{i=2}^{d}}}
            }
            =
            v\oo{\oo{1, \cc{x_i}_{i=2}^{d}}}
            -
            v\oo{\oo{0, \cc{x_i}_{i=2}^{d}}},
        \end{align*}
        so
        \begin{align*}
             \frac{\partial}{\partial x_1^2} v(x) = 0.
        \end{align*}
        Similarly, we will have that, if $\norm{\alpha}_{\infty} \ge 2$, then $\frac{\partial^{\abs{\alpha}}}{\partial x^\alpha} v(x) = 0$.

        Using induction, we can also show that, if $\alpha = \cc{\mathbf{1}_{i \le k}}_{i=1}^d$ with $k\le d$, then 
        \begin{align*}
            \frac{\partial^{\abs{\alpha}}}{\partial x^\alpha} v(x)
            =
            \sum_{y \in \{0,1\}^k}
            \cc{
            \oo{
            \prod_{i=1}^k(2y_i-1)
            }
            v\oo{\oo{y, \cc{x_i}_{i=k+1}^{d}}}},
        \end{align*}
        so
        \begin{align*}
            \abs{\frac{\partial^{\abs{\alpha}}}{\partial x^\alpha} v(x)}
            \le
            \sum_{y \in \{0,1\}^k}
            \abs{v\oo{\oo{y, \cc{x_i}_{i=k+1}^{d}}}}
            \le
            2^k
            \sup_{y \in \{0,1\}^d} \abs{Q(y)}
            \le
            2^d
            \sup_{y \in \{0,1\}^d} \abs{Q(y)}.
        \end{align*}
        Similarly, we also have that 
        \begin{align*}
            \abs{\frac{\partial^{\abs{\alpha}}}{\partial x^\alpha} v(x)} \le 2^d
            \sup_{y \in \{0,1\}^d} \abs{Q(y)}
        \end{align*}
        for all $\alpha \in \oo{\mathbb{N}_0}^d$.
    \end{proof}

%    Next, we will show that the conversion from discrete problem to the continuous problem is valid, given the connection of the solution of one to another.

%    \begin{proposition}
%    \label{proposition:soltuions}
%        If $y^* \sin \armax$
%    \end{proposition}
%    \begin{proof}
        
%    \end{proof}

\section{1-Dimensional Distribution}
\label{appendix:1ddistribution}

    \begin{definition}
        A distribution $\sigma \in \Delta\oo{\mathbb{R}}$ is monotone in the interior region if, for any $z, z' \in \mathbb{R}$ with $z' >z$ and $\sigma(z) >0$, we have that $\sigma(z') > \sigma(z)$.
    \end{definition}
    \begin{corollary}
        If a distribution $\sigma$ is monotone in the interior region, then $\text{support}(\sigma)$ is a closed interval.
    \end{corollary}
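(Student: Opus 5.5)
We need to show: if $\sigma$ is monotone in the interior region, then $\text{support}(\sigma)$ is a closed interval. Here $\sigma(z)$ denotes the CDF value at $z$, and the support is the set of points $z$ such that every open neighborhood of $z$ has positive $\sigma$-mass.

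First, the support of any distribution on $\mathbb{R}$ is closed by definition, since its complement is the union of all open sets of measure zero. It therefore suffices to show that the support is convex: if $a<b$ are both in the support and $a<c<b$, then $c$ is in the support.

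We argue by contradiction. Suppose $c\notin\text{support}(\sigma)$. Then there is $\delta>0$ with $\delta < \min(c-a,\,b-c)$ such that $\sigma$ assigns zero mass to $(c-\delta,c+\delta)$. Set $z=c-\delta/2$ and $z'=c+\delta/2$. Because $(z,z']$ lies inside the null interval, $\sigma(z')-\sigma(z)=\mathbb{P}(z<X\le z')=0$, so $\sigma(z')=\sigma(z)$.

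Next we check that $\sigma(z)>0$. Since $a\in\text{support}(\sigma)$ and $a<z$, the open neighborhood $(a-1,z)$ of $a$ has positive mass. Hence $\sigma(z)\ge \mathbb{P}(X\in(a-1,z))>0$.

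Now monotonicity in the interior region, applied to $z'>z$ with $\sigma(z)>0$, gives $\sigma(z')>\sigma(z)$. This contradicts $\sigma(z')=\sigma(z)$, so $c\in\text{support}(\sigma)$.

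Combining the two steps, the support is a closed convex subset of $\mathbb{R}$, that is, a closed interval, possibly unbounded such as $[m,\infty)$ or $\mathbb{R}$, consistent with the Gaussian-mixture example. It is nonempty because $\sigma$ is a probability distribution.

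The only delicate step is choosing $z$ strictly to the right of a support point, so that $\sigma(z)>0$ holds and the monotonicity hypothesis can actually be applied. The hypothesis imposes nothing to the left of the support, where $\sigma$ vanishes, and that is exactly why we anchor $z$ past $a$.
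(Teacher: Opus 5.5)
Your proof is correct and follows essentially the paper's argument: a point missing from the support yields an interval on which the CDF is flat at a strictly positive level, contradicting monotonicity in the interior region. The only difference is packaging, since the paper names the endpoints $\ubar{z}=\inf\{z:\sigma(z)>0\}$, $\bar{z}=\sup\{z:\sigma(z)<1\}$ and shows $\supp(\sigma)=[\ubar{z},\bar{z}]\cap\mathbb{R}$, whereas you argue ``closed and convex''. One small remark: you check only $\sigma(z)>0$, which matches the definition as literally stated. The proof of Proposition~\ref{proposition:inversecondition} shows the intended hypothesis is $\sigma(z)\in(0,1)$; the literal version would exclude every compactly supported $\sigma$. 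Under that reading you also need $\sigma(z)<1$, which follows at once from $b\in\supp(\sigma)$ and $z<b$, and this is the only place $b$ is actually used.
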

    \begin{proof}
        Let $\ubar{z}$ be $\inf\oo{\left\{z\in \mathbb{R}: \sigma(z) > 0\right\}}$, and $\bar{z}$ be $\sup\oo{\left\{z\in \mathbb{R}: \sigma(z) < 1\right\}}$. Thus, $\text{support}(\sigma) \subseteq \cc{\ubar{z}, \bar{z}}$. Next, we will show that $\text{support}(\sigma) = \cc{\ubar{z}, \bar{z}} \cap \mathbb{R}$. 

        Since the support is closed by definition, we will have that, if $\text{support}(\sigma) \ne \cc{\ubar{z}, \bar{z}} \cap \mathbb{R}$, then, there exists a non-zero measure closed interval $[a,b] \subseteq \oo{\ubar{z}, \bar{z}}$ such that $[a,b] \cap \text{support}(\sigma) = \emptyset$. However, since $\sigma(a) >0$, so $\sigma(b)> \sigma(a)$, contradicting with $[a,b] \cap \text{support}(\sigma) = \emptyset$. 

        Therefore, $\text{support}(\sigma) = \cc{\ubar{z}, \bar{z}} \cap \mathbb{R}$, which is a closed interval.
    \end{proof}

    \begin{definition}
        A distribution $\sigma \in \Delta\oo{\mathbb{R}}$ is non-atomic, if there does not exists any $z \in \mathbb{R}$ with $\sigma\oo{\{z\}} > 0$.
    \end{definition}
    \begin{corollary}
        If a distribution $\sigma$ is non-atomic, then the CDF $\sigma$ is continuous.
    \end{corollary}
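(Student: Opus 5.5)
We prove the corollary by showing that continuity of the CDF fails exactly at atoms, arguing pointwise. Write $\sigma(z) = \sigma\oo{(-\infty, z]}$ for the CDF. Right-continuity holds for every distribution. It follows from continuity from above of the probability measure: for any sequence $z_n \downarrow z$, the sets $(-\infty, z_n]$ decrease to $(-\infty, z]$, so $\sigma(z_n) \to \sigma(z)$. This step uses nothing beyond countable additivity, and it does not need the non-atomic hypothesis.

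The work is in left-continuity. Take a sequence $z_n \uparrow z$ with $z_n < z$. The sets $(-\infty, z_n]$ increase to $(-\infty, z)$, not to $(-\infty,z]$. By continuity from below,
\begin{align*}
    \lim_{n \to \infty} \sigma(z_n) = \sigma\oo{(-\infty, z)} = \sigma(z) - \sigma\oo{\{z\}}.
\end{align*}
The non-atomic assumption gives $\sigma\oo{\{z\}} = 0$, so the left limit equals $\sigma(z)$.

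Monotonicity of the CDF lets us pass from these sequential limits to genuine one-sided limits. Since the one-sided limits exist and agree with $\sigma(z)$ at every $z \in \mathbb{R}$, the CDF is continuous everywhere.

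There is no real obstacle in this argument. The only point needing care is to distinguish $(-\infty,z)$ from $(-\infty,z]$ in the left limit, because that is exactly where the atom mass $\sigma(\{z\})$ appears. As a remark, the same computation shows the converse: any jump of the CDF at $z$ has size $\sigma(\{z\})$, so continuity of the CDF is equivalent to being non-atomic.
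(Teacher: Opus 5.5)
Your proof is correct and matches the paper's argument: right-continuity is standard, and the left limit equals $\sigma\bigl((-\infty,z)\bigr) = \sigma(z) - \sigma(\{z\}) = \sigma(z)$ by continuity from below, exactly as the paper computes via the union $\bigcup_{z'<z}(-\infty,z']$. You also justify two steps the paper leaves implicit, namely right-continuity via continuity from above and the passage from sequential limits to one-sided limits.
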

    \begin{proof}
        Since CDF is right-continuous by definition, it then suffices to show that $\sigma$ is left-continuous. This is true since, for any $z \in \mathbb{R}$,
        \begin{align*}
            \lim_{z' \to z^{-}} \sigma(z)
            =
            \sigma\oo{\bigcup_{z' < z} \oc{-\infty, z'}}
            =
            \sigma\oo{\oo{-\infty, z}}
            =
            \sigma\oo{\oc{-\infty, z}}- \sigma\oo{\{ z\}}
            =
            \sigma(z)- 0.
        \end{align*}
    \end{proof}

    \begin{definition}
    \label{def:invertible}
        A distribution $\sigma \in \Delta\oo{\mathbb{R}}$ is well-invertible if, there exists a function $f: (0,1) \to \mathbb{R}$ such that 
        \begin{align*}
            z = f(\sigma(z))
        \end{align*}
        for any $z \in \text{int}\oo{\text{support}(\sigma)}$, and
        \begin{align*}
            x = \sigma\oo{f(x)}
        \end{align*}
        for any $x \in (0,1)$.
    \end{definition}
    
    \begin{proposition}
    \label{proposition:inversecondition}
        A distribution $\sigma \in \Delta\oo{\mathbb{R}}$ is well-invertible if and only if $\sigma$ is monotone in the interior region and non-atomic.
    \end{proposition}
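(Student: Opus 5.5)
We prove the two implications of Proposition~\ref{proposition:inversecondition} separately. Throughout, $\sigma$ denotes the CDF. Write $\ubar{z}<\bar{z}$ for the endpoints of the support, as in the corollary above, and let $I:=\interior(\supp(\sigma))=(\ubar{z},\bar{z})\cap\mathbb{R}$.

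\emph{Sufficiency.} Assume $\sigma$ is monotone in the interior region and non-atomic. By the second corollary above, $\sigma$ is continuous. By the first corollary, the support is an interval, and on $I$ we have $0<\sigma<1$. The lower bound holds because $z>\ubar{z}$ is preceded by points of positive mass. The upper bound holds because, if $\sigma(z)=1$ for some $z<\bar{z}$, this would contradict the definition of $\bar{z}$.

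Monotonicity then makes $\sigma$ strictly increasing on $I$. Indeed, for $z<z'$ in $I$ we have $\sigma(z)>0$, hence $\sigma(z')>\sigma(z)$. Continuity gives $\sigma(z)\to 0$ as $z\downarrow\ubar{z}$ and $\sigma(z)\to 1$ as $z\uparrow\bar{z}$. At a finite endpoint this uses non-atomicity, e.g.\ $\sigma(\ubar{z})=\sigma((-\infty,\ubar{z}])=0$. At an infinite endpoint it follows from the limits of a CDF.

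So $\sigma|_I:I\to(0,1)$ is a continuous strictly increasing bijection, by the intermediate value theorem. Define $f:=(\sigma|_I)^{-1}$. Both identities in Definition~\ref{def:invertible} then hold by construction.

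\emph{Necessity.} Assume some $f$ satisfies both identities.

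For non-atomicity, suppose $\sigma(\{z_0\})=a>0$. Then $\sigma$ jumps from $\sigma(z_0^-)$ to $\sigma(z_0)$, and the image of $\sigma$ misses the nonempty interval $(\sigma(z_0^-),\sigma(z_0))$. Since $0<a\le 1$, this interval meets $(0,1)$ in a nonempty set. Pick $x$ in that intersection. Then $\sigma(f(x))=x$ is impossible, a contradiction.

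For monotonicity in the interior region, take $z<z'$ with $\sigma(z)>0$, and suppose $\sigma(z')=\sigma(z)$. If $\sigma(z)=1$, the condition is vacuous in spirit, and I would treat this degenerate case by noting that $z\ge\bar{z}$ and checking the definition's intended reading. Otherwise $\sigma(z)<1$, so $z$ and $z'$ both lie in $[\ubar{z},\bar{z}]$. Since $\sigma$ is constant on $[z,z']$, no mass lies in $(z,z']$. But $z'\le\bar{z}$ and mass exists to the right of $z'$, and $\sigma(z)>0$ gives mass to the left of $z$. Hence $(z,z')\subseteq I$, because the support is unaffected by interior gaps only if the support skips them; more carefully, $(z,z')$ is disjoint from the support, which contradicts the support being an interval. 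I would avoid this circularity by using the first identity directly on $I$. Pick $w_1<w_2$ in $(z,z')\cap I$, which is nonempty after shrinking if needed. Then $\sigma(w_1)=\sigma(w_2)$ forces $w_1=f(\sigma(w_1))=f(\sigma(w_2))=w_2$, a contradiction.

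\emph{Main obstacle.} The delicate step is necessity of monotonicity. A flat piece of $\sigma$ could lie in a gap of the support, and such a gap is not in $\interior(\supp\sigma)$ if the support is disconnected. This is exactly the case where $z\mapsto f(\sigma(z))$ gives no information.

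To handle it, I would argue that a gap $(a,b)$ with mass on both sides makes the value $c=\sigma(a)\in(0,1)$ ambiguous. Then $f(c)$ must be a point $w$ with $\sigma(w)=c$, so $w\in[a,b)$. Now take points $z\in\interior(\supp)$ just left of $a$ (if the support accumulates there) or use the atom-free continuity already established. Comparing $f(\sigma(z))$ with such $z$ gives either a jump in the image or a collision, both of which contradict the identities.

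If this fails for pathological supports (for example Cantor-type supports, where the interior may be empty), I would restrict to the case that the paper's examples need and point out the gap.
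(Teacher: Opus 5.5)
\textbf{Verdict: genuine gap.} Your sufficiency argument and your non-atomicity argument are correct and follow the paper's route: restrict $\sigma$ to $\interior(\supp(\sigma))$ and invert the resulting continuous increasing bijection onto $(0,1)$; an atom makes $\sigma$ skip the nonempty interval $(\sigma(z_0^-),\sigma(z_0))\subseteq(0,1)$.

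The gap is in the necessity of monotonicity. It cannot be closed, because that implication is false. Your first attempt breaks where you pick $w_1<w_2$ in $(z,z')\cap I$ ``after shrinking''. You have just shown that $(z,z')$ carries no mass, so it is disjoint from the support and $(z,z')\cap I=\emptyset$; the collision argument has nothing to act on. Your fallback yields no contradiction either. Let $(a,b)$ be a bounded component of the complement of $\supp(\sigma)$. Then $\sigma$ equals $c=\sigma(a)$ exactly on $[a,b]$, and no point of $[a,b]$ is interior to the support. So the identity $z=f(\sigma(z))$ never constrains $f(c)$, and $\sigma(f(c))=c$ holds for any choice $f(c)\in[a,b]$. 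There is neither a jump (by non-atomicity) nor a collision.

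\textbf{Counterexample.} Let $\sigma=\tfrac12\mathrm{Unif}([0,1])+\tfrac12\mathrm{Unif}([2,3])$. Then $\sigma(t)=t/2$ on $[0,1]$, $\sigma\equiv\tfrac12$ on $[1,2]$, $\sigma(t)=(t-1)/2$ on $[2,3]$, and $\interior(\supp(\sigma))=(0,1)\cup(2,3)$. Take $f(x)=2x$ for $x\le\tfrac12$ and $f(x)=2x+1$ for $x>\tfrac12$. One checks $\sigma(f(x))=x$ on $(0,1)$ and $f(\sigma(z))=z$ on $(0,1)\cup(2,3)$, so $\sigma$ is non-atomic and well-invertible. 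Yet $\sigma(1)=\sigma(2)=\tfrac12\in(0,1)$, so $\sigma$ is not monotone in the interior region, even under the charitable reading that only constrains $z$ with $\sigma(z)<1$. Your Cantor-type worry is also a counterexample, but no pathology is needed.

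\textbf{The paper's proof has the same hole.} It claims that non-atomicity forces $a,b\in\interior(\supp(\sigma))$ whenever $\sigma(a)=\sigma(b)\in(0,1)$; the example above refutes this.

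\textbf{What is actually true.} Every CDF is strictly increasing on $\interior(\supp(\sigma))$: if $z$ is interior, then $(z,z+\delta)\subseteq\supp(\sigma)$ has positive mass. Hence well-invertibility is equivalent to non-atomicity alone. The ``if'' direction you proved is true. The ``only if'' direction holds only after adding a hypothesis such as connectedness of the support.

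\textbf{The case $\sigma(z)=1$.} Your ``degenerate case'' is not vacuous under the literal definition. $\mathrm{Unif}([0,1])$ is well-invertible but violates the literal condition at $z=1$, $z'=2$. So the reading restricted to $\sigma(z)\in(0,1)$ must be adopted explicitly.
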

    \begin{proof}
        We first prove the backward direction. Assume that $\sigma$ is monotone in the interior region and non-atomic. Define $\oo{\ubar{z}, \bar{z}} = \oo{\inf(\supp(\sigma)), \sup(\supp(\sigma))}$, and a function $\hat{\sigma}: \oo{\ubar{z}, \bar{z}} \to [0,1]$ such that $\hat{\sigma}$ agrees with $\sigma$ everywhere in its domain. We will have that $\hat{\sigma}$ is an increasing function with range being $(0,1)$. Thus, by redefining $\hat{\sigma}: \oo{\ubar{z}, \bar{z}} \to (0,1)$, we will have that $\hat{\sigma}$ is bijective, and a function $\hat{\sigma}^{-1}$ is well-defined in a conventional inverse function sense. By choosing $f$ to be $\hat{\sigma}^{-1}$, we will have that $\sigma$ is invertible.

        Next, we prove the forward direction. We assume that $\sigma$ is well-invertible. Since the range of $\sigma$ is a superset of $(0,1)$ if and only if $\sigma$ is non-atomic, so it is non-atomic. Assume that $\sigma$ is not monotone in the interior. There exists $(a,b) \in \mathbb{R}^2$ with $b>a$ such that $\sigma(a) = \sigma(b) \in (0,1)$. Since $\sigma$ is non-atomic, we have that $a,b \in \interior(\supp(\sigma))$, but
        \begin{align*}
            a = f(\sigma(a)) = f(\sigma(b)) = b > a,
        \end{align*}
        creating a contradiction. Thus, $\sigma$ is monotone in the interior.
    \end{proof}
    Note that, if we only require the first condition, then the definition will be different. For example, there can be a point mass at $\inf(\supp(\sigma))$ and $\sup(\supp(\sigma))$. Moreover, we will also have that, we can uniquely define an ``inverse" CDF function $\sigma^{-1}$ or be the unique choice of $f$ that satisfies the condition in the definition~\ref{def:invertible}.
    \begin{proposition}
    \label{proposition:inverseCDF}
        If a distribution $\sigma$ is well-invertible, then there exists a unique $f: (0,1) \to \mathbb{R}$ satisfying the condition in the definition~\ref{def:invertible}. We denote such function $f$ as $\sigma^{-1}$ for notational simplicity, even when it is not necessarily the inverse of the function $\sigma$ in a conventional sense. Moreover, $\sigma^{-1}$ is an increasing function.
    \end{proposition}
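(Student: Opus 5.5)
The plan has three parts: existence, which the forward direction of Proposition~\ref{proposition:inversecondition} already gives; uniqueness, which comes from pinning down $f$ pointwise; and monotonicity, which comes from the two defining identities of Definition~\ref{def:invertible}.

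\emph{Existence.} Since $\sigma$ is well-invertible, Proposition~\ref{proposition:inversecondition} says $\sigma$ is monotone in the interior region and non-atomic. Its proof exhibits a valid $f$: set $\ubar{z} = \inf(\supp(\sigma))$ and $\bar{z} = \sup(\supp(\sigma))$, and take $f$ to be the inverse of the bijection $\hat{\sigma}: \oo{\ubar{z}, \bar{z}} \to (0,1)$, the restriction of the CDF.

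\emph{Uniqueness.} Let $f_1, f_2$ both satisfy Definition~\ref{def:invertible}, and fix $x \in (0,1)$. From $\sigma(f_1(x)) = x = \sigma(f_2(x))$, I first show that $f_1(x)$ and $f_2(x)$ lie in $\interior(\supp(\sigma)) = \oo{\ubar{z}, \bar{z}}$. Suppose, say, $f_1(x) \le \ubar{z}$. Then $\sigma(f_1(x)) \le \sigma(\ubar{z}) = 0$, where $\sigma(\ubar{z}) = 0$ because $\sigma$ is non-atomic and $\sigma(z) = 0$ for $z < \ubar{z}$; this contradicts $x > 0$. The case $f_1(x) \ge \bar{z}$ similarly gives $\sigma(f_1(x)) = 1$, contradicting $x < 1$. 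Once both points are interior, the first identity of Definition~\ref{def:invertible} applied with $f_1$ gives
\begin{align*}
    f_2(x) = f_1\oo{\sigma(f_2(x))} = f_1(x).
\end{align*}
The identity is applied at the interior point $f_2(x)$, which is why the interior check is needed. Hence $f_1 = f_2$, and we may write $\sigma^{-1}$ for this unique function.

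\emph{Monotonicity.} Take $0 < x < x' < 1$. If $\sigma^{-1}(x) \ge \sigma^{-1}(x')$, then monotonicity of the CDF would give $x = \sigma(\sigma^{-1}(x)) \ge \sigma(\sigma^{-1}(x')) = x'$, a contradiction. Therefore $\sigma^{-1}(x) < \sigma^{-1}(x')$, so $\sigma^{-1}$ is in fact strictly increasing.

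The main obstacle is the uniqueness step. Its delicate point is that the identity $z = f(\sigma(z))$ holds only on the interior of the support, so we must verify that every candidate $f$ maps $(0,1)$ into that interior. Non-atomicity enters exactly there, to rule out $f(x)$ landing on a boundary point or outside the support.
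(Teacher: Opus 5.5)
Your proof is correct, but the uniqueness step takes a different route from the paper's.

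\textbf{The paper's route.} It uses only the first identity $z = f(\sigma(z))$ on $\interior(\supp(\sigma))$. It combines this with the fact, taken from the proof of Proposition~\ref{proposition:inversecondition}, that $\hat{\sigma}$ is a bijection from $\interior(\supp(\sigma))$ onto $(0,1)$. Then every $x \in (0,1)$ equals $\hat{\sigma}(z)$ for a unique interior $z$, so $f(x) = z = \hat{\sigma}^{-1}(x)$ is forced. No localization is needed, because the preimage is interior by construction. Monotonicity is then inherited from $\hat{\sigma}$ being increasing.

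\textbf{Your route.} You take the preimage of $x$ from the second identity of a competing candidate, $z = f_2(x)$. That is why you need the extra check, using non-atomicity at $\ubar{z}$, that $f_2(x)$ is interior. In exchange, you never use surjectivity of $\hat{\sigma}$ onto $(0,1)$. Your monotonicity argument also uses only the second identity, so it applies to any candidate $f$ and gives strict monotonicity without identifying $f$ with $\hat{\sigma}^{-1}$.

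\textbf{Two small points.}
\begin{enumerate}
  \item Existence is immediate from Definition~\ref{def:invertible}, since well-invertibility means such an $f$ exists. The detour through Proposition~\ref{proposition:inversecondition} only buys the explicit form $\hat{\sigma}^{-1}$.
  \item Your equality $\interior(\supp(\sigma)) = \oo{\ubar{z}, \bar{z}}$ requires the support to have no gaps. Cite the appendix corollary that a distribution monotone in the interior region has a closed interval as its support; you have already derived that property from Proposition~\ref{proposition:inversecondition}.
\end{enumerate}
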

    \begin{proof}
        From the proof of the proposition~\ref{proposition:inversecondition}, we have that a necessary condition is that
        \begin{align*}
            z = f\oo{\hat{\sigma}(z)}
        \end{align*}
        for any $z \in \interior(\supp(\sigma))$, and 
        \begin{align*}
            x = \hat{\sigma}(f(x))
        \end{align*}
        for any $x \in (0,1)$. Since $\hat{\sigma}$ is a bijection from $\interior(\supp(\sigma))$ to $(0,1)$, we then have that the only function $f$ satisfying this condition is $\hat{\sigma}^{-1}$. We also have shown in the proof of the proposition~\ref{proposition:inversecondition}, that choosing $f$ being $\hat{\sigma}^{-1}$ will satisfy the conditions in the definition, so we have that $\hat{\sigma}^{-1}$ is a unique function $f$. We then denote it as $\sigma^{-1}$ for notational simplicity.

        $\sigma^{-1}$ is increasing function, since $\hat{\sigma}$ is increasing.
    \end{proof}

    \begin{lemma}
    \label{lemma:inverseinverseCDF}
        If a distribution $\sigma$ is well-invertible, for the corresponding $\sigma^{-1}$ in the proposition~\ref{proposition:inverseCDF}, we have that there exists a unique distribution $\tilde{\sigma}$ such that 
        \begin{align*}
            x = \tilde{\sigma}(\sigma^{-1}(x))
        \end{align*}
        for any $x \in (0,1)$.
        Moreover, such $\tilde{\sigma} = \sigma$.
    \end{lemma}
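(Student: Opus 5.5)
We will prove the existence claim by exhibiting $\tilde{\sigma} = \sigma$, and the uniqueness claim by showing that any distribution with the stated property has the same CDF as $\sigma$.

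\textbf{Existence.} By Proposition~\ref{proposition:inverseCDF}, $\sigma^{-1}$ is the unique function satisfying the two conditions of Definition~\ref{def:invertible}. The second of these conditions is exactly $x = \sigma(\sigma^{-1}(x))$ for every $x \in (0,1)$. Hence $\tilde{\sigma} = \sigma$ has the required property.

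\textbf{Uniqueness.} Let $\tilde{\sigma}$ be any distribution with $\tilde{\sigma}(\sigma^{-1}(x)) = x$ for all $x \in (0,1)$, and write $\ubar{z} = \inf \supp(\sigma)$ and $\bar{z} = \sup \supp(\sigma)$, possibly infinite.

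First we identify the image of $\sigma^{-1}$. From the proof of Proposition~\ref{proposition:inversecondition}, $\sigma^{-1}$ is the inverse of the bijection $\hat{\sigma}: (\ubar{z}, \bar{z}) \to (0,1)$. So $\sigma^{-1}$ maps $(0,1)$ onto $(\ubar{z}, \bar{z})$. Consequently, for every $z \in (\ubar{z}, \bar{z})$,
\begin{align*}
\tilde{\sigma}(z) = \tilde{\sigma}\oo{\sigma^{-1}(\hat{\sigma}(z))} = \hat{\sigma}(z) = \sigma(z).
\end{align*}
Thus the two CDFs agree on the open interval $(\ubar{z}, \bar{z})$.

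Next we extend the agreement to the whole real line; this is the main point requiring care.

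\emph{Left of the support.} For $z \le \ubar{z}$ (when $\ubar{z}$ is finite), monotonicity of the CDF gives
\begin{align*}
0 \le \tilde{\sigma}(z) \le \inf_{x \in (0,1)} \tilde{\sigma}(\sigma^{-1}(x)) = \inf_{x \in (0,1)} x = 0.
\end{align*}
Hence $\tilde{\sigma}(z) = 0 = \sigma(z)$. The equality $\sigma(\ubar{z}) = 0$ holds because $\sigma$ is non-atomic by Proposition~\ref{proposition:inversecondition}.

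\emph{Right of the support.} For $z \ge \bar{z}$ (when $\bar{z}$ is finite), monotonicity gives $\tilde{\sigma}(z) \ge \sup_{x} x = 1$. So $\tilde{\sigma}(z) = 1 = \sigma(z)$. Equivalently, right-continuity of $\tilde{\sigma}$ from inside the interval yields the same value at the endpoint.

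\textbf{Conclusion.} The CDFs of $\tilde{\sigma}$ and $\sigma$ coincide on all of $\mathbb{R}$. Since a distribution on $\mathbb{R}$ is determined by its CDF, $\tilde{\sigma} = \sigma$. This proves both uniqueness and the identification.
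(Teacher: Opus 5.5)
Your proof is correct and follows essentially the same route as the paper. You show that $\tilde{\sigma}$ and $\sigma$ agree on $\interior(\supp(\sigma))$ by substituting $x = \sigma(z)$, then use monotonicity together with the boundary values $0$ and $1$ to force agreement outside the support. The only slip is your aside that right-continuity ``from inside the interval'' gives the value at $\bar{z}$: approaching the right endpoint from inside is a left limit, which a CDF need not respect. Your monotonicity bound $\tilde{\sigma}(z) \ge \sup_x x = 1$ already settles that case, so you can simply drop the aside.
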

    \begin{proof}
        Note that $\tilde{\sigma} = \sigma$ works from the proposition~\ref{proposition:inverseCDF}.

        By choosing $x = \sigma(z)$, we can get any $x \in (0,1)$ by choosing $z \in \interior\oo{\supp(\sigma)}$, and vice versa. Thus, we have that the condition becomes
        \begin{align*}
            \sigma(z) = \tilde{\sigma}(z)
        \end{align*}
        for all $z \in \interior\oo{\supp(\sigma)}$.
        Note that $\lim_{z \to \oo{\inf\oo{\supp(\sigma)}}^+} \sigma(z) = 0$, and $\lim_{z \to \oo{\sup\oo{\supp(\sigma)}}^-} \sigma(z) = 1$, so we have that $\tilde{\sigma}(z) = 0$ for all $z \le \inf\oo{\supp(\sigma)}$, and $\tilde{\sigma}(z) = 1$ for all $z \ge \sup\oo{\supp(\sigma)}$.

        Thus, $\sigma(z) = \tilde{\sigma}(z)$ for all $z \in \mathbb{R}$.
    \end{proof}

    From this, we can then prove the following lemma.
    \begin{lemma}
    \label{lemma:calibrated}
        For any well-invertible distribution $\sigma$ as defined in the definition~\ref{def:invertible}, $\epsilon \sim \sigma^d$ if and only if 
        \begin{align*}
            \cc{\mathbf{1}_{\sigma^{-1}(x_i)-\epsilon_i \ge 0}}_{i=1}^d \sim P_x
        \end{align*}
        for any $x \in (0,1)^d$, where $\sigma^{-1}$ is the inverse CDF as defined in the proposition~\ref{proposition:inverseCDF}.
    \end{lemma}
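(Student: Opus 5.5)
Both directions reduce to a statement about the marginal CDF of each $\epsilon_i$ together with independence across coordinates. Write $k_i(x_i) := \mathbf{1}_{\sigma^{-1}(x_i)-\epsilon_i \ge 0}$, so the vector in the statement is $[k_i(x_i)]_{i=1}^d$. For a fixed $x$, $\prob{k_i(x_i)=1} = \prob{\epsilon_i \le \sigma^{-1}(x_i)}$. The law $P_x$ is the product of $\text{Bernoulli}(x_i)$, so $[k_i(x_i)]_i \sim P_x$ holds exactly when the coordinates are independent and $\prob{\epsilon_i \le \sigma^{-1}(x_i)} = x_i$ for each $i$.

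\textbf{Forward direction.} Assume $\epsilon \sim \sigma^d$. Then the $\epsilon_i$ are independent, and so are the $k_i(x_i)$, since each is a measurable function of a single $\epsilon_i$. For the marginals, $\prob{\epsilon_i \le \sigma^{-1}(x_i)} = \sigma(\sigma^{-1}(x_i)) = x_i$. This is the second identity in definition~\ref{def:invertible}, with the unique inverse supplied by proposition~\ref{proposition:inverseCDF}. Hence $\prob{K=y} = \prod_i [x_iy_i+(1-x_i)(1-y_i)]$.

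\textbf{Backward direction.} Here the hypothesis must be read as holding for all $x \in (0,1)^d$ simultaneously, with the same $\epsilon$. Take $\tilde\sigma_i$ to be the CDF of $\epsilon_i$. Summing $\prob{K=y}$ over all $y$ with $y_i=1$ gives $\tilde\sigma_i(\sigma^{-1}(x_i)) = x_i$ for every $x_i \in (0,1)$. Lemma~\ref{lemma:inverseinverseCDF} then yields $\tilde\sigma_i = \sigma$, so each $\epsilon_i \sim \sigma$.

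Independence is the harder step, because the hypothesis only controls the events $\{\epsilon_i \le t_i\}$ for thresholds $t_i = \sigma^{-1}(x_i)$ ranging over $\interior(\supp(\sigma))$. First, the product form of $P_x$ at $y=\mathbf{1}$ gives
\begin{align*}
\prob{\epsilon_1 \le t_1,\dots,\epsilon_d\le t_d} = \prod_i \sigma(t_i)
\end{align*}
for all $t$ in $\interior(\supp(\sigma))^d$. Next, this extends to all $t \in \mathbb{R}^d$. If some $t_i$ lies below the support, both sides vanish: the left side is bounded by $\prob{\epsilon_i\le t_i}=0$ because the marginal is already known to be $\sigma$. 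If some $t_i$ lies above the support, the corresponding event has probability one and can be dropped. Boundary points of the support are handled by right-continuity of the CDFs. Finally, the joint CDF determines the joint law (a $\pi$--$\lambda$ argument on orthants), so $\epsilon \sim \sigma^d$.

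The main obstacle is this boundary-extension step, in particular at endpoints of $\supp(\sigma)$ where the hypothesis gives no direct information. The fix is to use the marginal identification first, so that $\epsilon_i$ is known to lie in $\supp(\sigma)$ almost surely, and then pass to limits in the joint CDF identity.
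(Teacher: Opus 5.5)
Your proof is correct and follows the paper's structure. The forward direction uses $\sigma(\sigma^{-1}(x_i))=x_i$ plus independence, and the backward direction first identifies each marginal through $\tilde\sigma_i(\sigma^{-1}(x_i))=x_i$ and Lemma~\ref{lemma:inverseinverseCDF}, then establishes independence.

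The paper disposes of independence with a one-line contradiction that it never justifies, since for a single fixed $x$ it can fail. Your factorization of the joint CDF at $y=\mathbf{1}$ over $\interior(\supp(\sigma))^d$, extended to $\mathbb{R}^d$ and closed with a $\pi$--$\lambda$ argument, actually proves that step. One small correction: reaching the upper endpoint of the support needs a limit from below. That limit is justified by the continuity of the now-identified marginals, not by right-continuity.
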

        \begin{proof}
        We first proof the forward direction.
        Consider any arbitrary $i \in \{1,2,\dots, d\}$.
        From the definition~\ref{def:invertible} and proposition~\ref{proposition:inversecondition}, we have that
        \begin{align*}
            \prob{\sigma^{-1}(x_i) - \epsilon_i \ge 0}
            =
            \prob{\epsilon_i \le \sigma^{-1}(x_i)}
            =
            \sigma\oo{\sigma^{-1}(x_i)} = x_i.
        \end{align*}
        From independence of $\left\{\epsilon_i\right\}_{i=1}^d$, we then have that $\cc{\mathbf{1}_{\sigma^{-1}(x_i)-\epsilon_i \ge 0}}_{i=1}^d \sim P_x$.
        
        For the backward direction, we first show that, in 1-dimensional case, by denoting the distribution of $\epsilon$ to be $\tilde{\sigma}$,
        \begin{align*}
            \mathbf{1}_{\sigma^{-1}(x) - \epsilon \ge 0} \sim \text{Bernoulli}(x)
        \end{align*}
        for all $x \in (0,1)$ if and only if 
        \begin{align*}
            \tilde{\sigma}\oo{\sigma^{-1}(x)} = x
        \end{align*}
        for all $x \in (0,1)$. From the lemma~\ref{lemma:inverseinverseCDF}, we then have that $\tilde{\sigma} = \sigma$. 

        In the multi-dimensional case, the marginal of each coordinate suggests that $\epsilon_i$ distributed according to $\sigma$. If $\{\epsilon_i\}_{i=1}^d$ is not independent, then $\left\{\mathbf{1}_{\sigma^{-1}(x_i)-\epsilon_i \ge 0}\right\}_{i=1}^d$ is not independent, creating a contradiction.
    \end{proof}

    \begin{definition}
    \label{def:pospdf}
        A distribution $\sigma$ is bounded positively-differentiable in the interior region if there exists some $H>0$ such that
        \begin{align*}
            \sigma'(z) \in \oc{0,H}
        \end{align*}
        for all $z \in \interior(\supp(\sigma))$.
    \end{definition}
    Note that this is stronger than well-inversibility and differentiability. Moreover, we only require it to be bounded away from $\infty$ but not from $0$. This is because we are interested in the derivative of $\sigma^{-1}$, and want it to be bounded away from $0$.

    \begin{corollary}
        If a distribution $\sigma$ is positively-differentiable in the interior region, then it is well-invertible, and $\sigma^{-1}: (0,1) \to \mathbb{R}$ as defined in the proposition~\ref{proposition:inverseCDF} is differentiable everywhere and
        \begin{align*}
            \oo{\sigma^{-1}}'(x) = \frac{1}{\sigma'\oo{\sigma^{-1}(x)}}.
        \end{align*}
    \end{corollary}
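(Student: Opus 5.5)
The corollary combines two standard facts: well-invertibility of $\sigma$ and the inverse function theorem in one variable. I would prove them in that order.

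\emph{Well-invertibility.} By Proposition~\ref{proposition:inversecondition}, it suffices to show that $\sigma$ is non-atomic and monotone in the interior region.

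For non-atomicity, $\sigma$ is differentiable at every point of $\interior(\supp(\sigma))$, so the CDF is continuous there and no interior point carries an atom. Points outside $\supp(\sigma)$ carry no mass. The only remaining candidates are the endpoints $\ubar{z}=\inf\supp(\sigma)$ and $\bar{z}=\sup\supp(\sigma)$ when they are finite.

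This is the step I expect to be the main obstacle, because the hypothesis says nothing directly about the boundary. The argument I would try is this. Differentiability, with derivative bounded by $H$ on the interior (Definition~\ref{def:pospdf}), makes $\sigma$ Lipschitz on the interior. Hence the one-sided limits of $\sigma$ at $\ubar{z}^+$ and $\bar{z}^-$ exist. If there were an atom at $\ubar{z}$, the CDF would jump there. In that case I would argue that the interior CDF values cannot fill $(0,1)$ near $0$, which is an issue for the range. If this argument does not close, I would read the definition, as the paper implicitly does, to require absolute continuity of $\sigma$ on the closure of the interior. Either way, the Lipschitz bound gives continuity on the interior, and that is what the later steps use.

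For monotonicity, I would invoke the mean value theorem: for $z<z'$ in the interior, $\sigma(z')-\sigma(z)=\sigma'(\xi)(z'-z)>0$, since $\sigma'(\xi)>0$. If $z$ is interior but $z'$ lies beyond $\bar z$, then $\sigma(z')=1>\sigma(z)$. This gives strict monotonicity in the sense of the definition.

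\emph{Derivative of $\sigma^{-1}$.} Write $\hat\sigma$ for the restriction of $\sigma$ to $(\ubar{z},\bar{z})$, as in the proof of Proposition~\ref{proposition:inversecondition}. This is a continuous, strictly increasing bijection onto $(0,1)$, and $\sigma^{-1}=\hat\sigma^{-1}$ by Proposition~\ref{proposition:inverseCDF}.

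Fix $x\in(0,1)$ and let $z=\sigma^{-1}(x)$. The inverse of a continuous strictly monotone map on an interval is continuous, so $h\to 0$ implies $k:=\sigma^{-1}(x+h)-z\to 0$, and $k\ne0$ whenever $h\ne0$. Then
\begin{align*}
\frac{\sigma^{-1}(x+h)-\sigma^{-1}(x)}{h}=\frac{k}{\hat\sigma(z+k)-\hat\sigma(z)}\to\frac{1}{\sigma'(z)},
\end{align*}
where the limit uses $\sigma'(z)>0$.

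Therefore $\sigma^{-1}$ is differentiable everywhere on $(0,1)$, with $(\sigma^{-1})'(x)=1/\sigma'(\sigma^{-1}(x))$. The upper bound $H$ additionally gives $(\sigma^{-1})'\ge 1/H$, a fact used later.
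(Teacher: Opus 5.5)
\paragraph{Where the gap is.} Your derivative computation is correct. It is the inverse-function formula that the paper's one-line proof asserts on the set where $\sigma'\neq 0$, and the paper there simply takes well-invertibility for granted. The gap is in the part you correctly identified as the real content. Under Definition~\ref{def:pospdf} as written, well-invertibility does not follow, so your endpoint argument cannot close.

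Take $\sigma=\frac12\delta_0+\frac12\mathrm{Unif}[0,1]$. Its support is $[0,1]$, and on $(0,1)$ the CDF is $\frac12+\frac z2$, with derivative $\frac12\in(0,H]$. So the hypothesis holds. Yet $\sigma$ never takes a value in $(0,\frac12)$, so no $f$ with $\sigma(f(x))=x$ exists. Your observation that the interior values ``cannot fill $(0,1)$ near $0$'' is not a contradiction with anything you are given; it is the failure of the conclusion. Your remark that Lipschitz continuity on the interior ``is what the later steps use'' is also not right. Your final step needs $\hat\sigma$ to be a bijection of $(\ubar{z},\bar{z})$ onto $(0,1)$.

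\paragraph{A second failure in the monotonicity step.} The mean value theorem for $z<z'$ in $\interior(\supp(\sigma))$ needs $\sigma$ differentiable on all of $(z,z')$. That tacitly assumes the interior of the support is an interval. You also only treat interior $z$, whereas monotonicity in the interior region quantifies over every $z$ with $\sigma(z)>0$.

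Take $\sigma=\frac12\mathrm{Unif}[0,1]+\frac12\mathrm{Unif}[2,3]$. Here $\sigma'=\frac12$ on $(0,1)\cup(2,3)$, but $\sigma(1)=\sigma(\frac32)=\frac12$, so $\sigma$ is not monotone in the interior region. Any $f$ with $\sigma(f(x))=x$ is forced to equal $2x$ on $(0,\frac12)$ and $2x+1$ on $(\frac12,1)$, with $f(\frac12)$ arbitrary in $[1,2]$. So the inverse is neither unique nor differentiable at $\frac12$. This example is absolutely continuous, so your fallback reading does not rescue it.

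\paragraph{The fix.} You must add to the hypothesis that $\sigma$ is non-atomic and $\supp(\sigma)$ is an interval, for instance that $\sigma$ has a density positive and bounded on an interval. Then every $z$ with $\sigma(z)\in(0,1)$ lies in $(\ubar{z},\bar{z})$, and your mean-value argument gives monotonicity. Proposition~\ref{proposition:inversecondition} then gives well-invertibility, and your difference-quotient computation goes through unchanged. With that addition your proof is more complete than the paper's.
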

    \begin{proof}
        From the proof of the proposition~\ref{proposition:inversecondition}, we have that $\oo{\sigma^{-1}}'(x) = \frac{1}{\sigma'\oo{\sigma^{-1}(x)}}$ when restrict to the region where $\sigma'\oo{\sigma^{-1}(x)} \ne 0$, which is every $x \in (0,1)$ by the definition~\ref{def:pospdf}.
    \end{proof}

    \begin{proof}
        We first proof the forward direction.
        Consider any arbitrary $i \in \{1,2,\dots, d\}$.
        From the definition~\ref{def:invertible} and proposition~\ref{proposition:inversecondition}, we have that
        \begin{align*}
            \prob{\sigma^{-1}(x_i) - \epsilon_i \ge 0}
            =
            \prob{\epsilon_i \le \sigma^{-1}(x_i)}
            =
            \sigma\oo{\sigma^{-1}(x_i)} = x_i.
        \end{align*}
        From independence of $\left\{\epsilon_i\right\}_{i=1}^d$, we then have that $\cc{\mathbf{1}_{\sigma^{-1}(x_i)-\epsilon_i \ge 0}}_{i=1}^d \sim P_x$.
        
        For the backward direction, we first show that, in 1-dimensional case, by denoting the distribution of $\epsilon$ to be $\tilde{\sigma}$,
        \begin{align*}
            \mathbf{1}_{\sigma^{-1}(x) - \epsilon \ge 0} \sim \text{Bernoulli}(x)
        \end{align*}
        for all $x \in (0,1)$ if and only if 
        \begin{align*}
            \tilde{\sigma}\oo{\sigma^{-1}(x)} = x
        \end{align*}
        for all $x \in (0,1)$. From the lemma~\ref{lemma:inverseinverseCDF}, we then have that $\tilde{\sigma} = \sigma$. 

        In the multi-dimensional case, the marginal of each coordinate suggests that $\epsilon_i$ distributed according to $\sigma$. If $\{\epsilon_i\}_{i=1}^d$ is not independent, then $\left\{\mathbf{1}_{\sigma^{-1}(x_i)-\epsilon_i \ge 0}\right\}_{i=1}^d$ is not independent, creating a contradiction.
    \end{proof}

\section{Deferred Proofs}

\subsection{Deferred Proofs from the Section~\ref{section:sgrad}}

    \paragraph{Proof for the Proposition~\ref{proposition:seval_characterization}}
    \begin{proof}
        We first prove the forward direction.
        For any $Q: \{0,1\}^d \to \mathbb{R}$, any $x \in (0,1)^d$,
        \begin{align*}
            \sum_{y \in \{0,1\}^d}\oo{Q(y) \prod_{i=1}^d\cc{x_iy_i+(1-x_i)(1-y_i)}}
            &= v(x; Q)\\
            &= \expect{V(x; \omega, Q)}\\
            &= \expect{f_{\text{v}}(x, \omega, Q(K(x; \omega)))}\\
            &= \expect{\sum_{y \in \{0,1\}^d}\oo{f_{\text{v}}(x, \omega, Q(y))\mathbf{1}_{K(x; \omega) = y}}}\\
            &= \sum_{y \in \{0,1\}^d} \expect{\oo{f_{\text{v}}(x, \omega, Q(y))\mathbf{1}_{K(x; \omega) = y}}}.
        \end{align*}
        For any $y \in \{0,1\}^d$, we can set $Q(y') = 0$ for all $y' \in \{0,1\}^d -\{y\}$, and get that
        \begin{align*}
            Q(y) \prod_{i=1}^d\cc{x_iy_i+(1-x_i)(1-y_i)}
            =
            \expect{\oo{f_{\text{v}}(x, \omega, Q(y))\mathbf{1}_{K(x; \omega) = y}}}.
        \end{align*}

        For the backward direction, we easily have that
        \begin{align*}
            \expect{V(x; \omega, Q)}
            &=
            \sum_{y \in \{0,1\}^d} \expect{\oo{f_{\text{v}}(x, \omega, Q(y))\mathbf{1}_{K(x; \omega) = y}}}\\
            &=
            \sum_{y \in \{0,1\}^d}\oo{Q(y) \prod_{i=1}^d\cc{x_iy_i+(1-x_i)(1-y_i)}}\\
            &= v(x; Q).
        \end{align*}
    \end{proof}

\subsection{Proof for the Theorem~\ref{theorem:easy}}

    Following the proposition~\ref{proposition:seval_characterization}, we can get an easy sufficient condition for a single-queried stochastic evaluation function $V$ to be gradientable.
    \begin{corollary}
    \label{corollary:encoder}
        For any stochastic evaluation function $V$ with respect to $\oo{\Omega, \mathcal{F}, \mathbb{P}}$ with $f_v$ and $K$ as in the definition~\ref{def:seval}, there exist some $d' \in \mathbb{N}$, $\mathcal{Z} \subseteq \mathbb{R}^{d'}$ being a set product of $d'$ open-interval, and bijective differentiable encoding function $e:(0,1)^d \to \mathcal{Z}$ such that there exist functions $\tilde{f}_{\text{v}}: \mathcal{Z} \times \Omega \times \mathbb{R} \to \mathbb{R}$ and $\tilde{K}: \mathcal{Z} \times \Omega \to \{0,1\}^d$ such that, for all $x \in (0,1)^d$, $q \in \mathbb{R}$, $y \in \{0,1\}^d$, $\omega \in \Omega$,
        \begin{align*}
            \tilde{f}_{\text{v}}(e(x), \omega, q) = f_{\text{v}}(x, \omega, q),
        \end{align*}
        and
        \begin{align*}
            \tilde{K}(e(x); \omega) = K(x; \omega).
        \end{align*}
    \end{corollary}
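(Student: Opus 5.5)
The plan is to build the encoding explicitly and choose it to be trivial in the essential case. Take $d' = d$ and $\mathcal{Z} = (0,1)^d$, which is a set product of $d$ open intervals. Let $e$ be the identity map on $(0,1)^d$; it is bijective and differentiable. Then set $\tilde{f}_{\text{v}} := f_{\text{v}}$ and $\tilde{K} := K$. The two required identities $\tilde{f}_{\text{v}}(e(x),\omega,q) = f_{\text{v}}(x,\omega,q)$ and $\tilde{K}(e(x);\omega) = K(x;\omega)$ then hold for every $x$, $q$ and $\omega$ by construction.

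The more useful version, and the one the proof of Theorem~\ref{theorem:easy} will actually need, takes $e$ to be the coordinatewise map $e(x) = \cc{\hat{\sigma}^{-1}(x_i)}_{i=1}^d$ into $\mathcal{Z} = \interior\oo{\supp\oo{\hat{\sigma}}}^d$. I would establish this version as well, in three steps.

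\begin{enumerate}
\item Show that $\supp(\hat{\sigma})$ is a closed interval. Positive differentiability gives well-invertibility by the corollary after Definition~\ref{def:pospdf}. Proposition~\ref{proposition:inversecondition} then gives monotonicity in the interior region, and the corollary on monotone distributions gives that the support is a closed interval. Hence its interior is an open interval (possibly unbounded), and $\mathcal{Z}$ is a product of $d$ open intervals.
\item Show that $e$ is a bijection $(0,1)^d \to \mathcal{Z}$. This follows from Proposition~\ref{proposition:inverseCDF}, since $\hat{\sigma}^{-1}$ is the genuine inverse of the restriction of $\hat{\sigma}$ to the interior of its support. The same corollary gives differentiability, with derivative $1/\hat{\sigma}'\oo{\hat{\sigma}^{-1}(x_i)}$.
\item Define $\tilde{f}_{\text{v}}(z,\omega,q) := f_{\text{v}}(e^{-1}(z),\omega,q)$ and $\tilde{K}(z;\omega) := K(e^{-1}(z);\omega)$. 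Both required identities then follow immediately from $e^{-1}(e(x)) = x$.
\end{enumerate}

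The only real obstacle is the regularity bookkeeping in the first two steps. One must check that the interior of the support is a nonempty open interval, so that $\mathcal{Z}$ has the stated product form, and that $\hat{\sigma}^{-1}$ is a genuine two-sided inverse there, not just a one-sided one. I would handle both by invoking the appendix results on well-invertible distributions as listed above. The statement asks only for existence, so the identity encoding already suffices for the corollary. I would present it first and note the $\hat{\sigma}^{-1}$ encoding as the instance used later.
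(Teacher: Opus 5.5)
Your identity encoding ($d'=d$, $\mathcal{Z}=(0,1)^d$, $e=\mathrm{id}$, $\tilde{f}_{\text{v}}=f_{\text{v}}$, $\tilde{K}=K$) is exactly the paper's one-line proof, and it fully establishes the corollary. The $\hat{\sigma}^{-1}$ encoding you add is not needed for this statement; it is the choice the paper makes later in the proof of Theorem~\ref{theorem:easy}, so you can drop it here.
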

    \begin{proof}
        This is obvious since we can choose $e(x) = x$ for all $x \in (0,1)^d$, making the Jacobian $D_x e(x) = I$.
    \end{proof}
    
    \begin{lemma}
    \label{lemma:whengradient}
        Consider a single-queried stochastic evaluation function $V$ with respect to $\oo{\Omega, \mathcal{F}, \mathbb{P}}$ with some $\oo{e, \mathcal{Z},\tilde{f}_{\text{v}}, \tilde{K}}$ as in the corollary~\ref{corollary:encoder}. 
        
        If, for every arbitrary $q \in \mathbb{R}$ and $y \in \{0,1\}^d$,
        \begin{enumerate}
            \item For $\mathbb{P}$-almost surely $\omega \in \Omega$, the mapping
            \begin{align*}
                z \in \mathcal{Z} \mapsto \tilde{f}_{\text{v}}(z, \omega, q)\mathbf{1}_{\tilde{K}(z;\omega)= y}
            \end{align*}
            is absolutely continuous;
            \item For every $z \in \mathcal{Z}$, for $\mathbb{P}$-almost surely $\omega \in \Omega$,
            \begin{align*}
                \nabla_z\oo{\tilde{f}_{\text{v}}(z, \omega, q)\mathbf{1}_{\tilde{K}(z;\omega)= y}}
            \end{align*}
            exists;
            \item There exists a function $g: \Omega \to \mathbb{R} \cup \{\infty\}$ that is integrable with respect to $\oo{\Omega, \mathcal{F}, \mathbb{P}}$, and
            \begin{align*}
                \norm{\nabla_{z'}\oo{\tilde{f}_{\text{v}}(z', \omega, q)\mathbf{1}_{\tilde{K}(z';\omega)= y}}}_{\infty} \le g(\omega)
            \end{align*}
            for all $\omega \in \Omega$ and all $z \in \mathcal{Z}$ where the gradient is well-defined,
        \end{enumerate}
        then $V$ is a single-queried gradientable stochastic evaluation function, and
        \begin{align*}
            G(x; \omega, Q) :=
            \cc{\sum_{y\in \{0,1\}^d} \oo{\mathbf{1}_{\tilde{K}(e(x);\omega)= y}
            \nabla_{z = e(x)}\tilde{f}_{\text{v}}(z, \omega, Q(y))
            }}^\top D_x e(x)
        \end{align*}
        is a single-queried stochastic gradient function, both with respect to $\oo{\Omega, \mathcal{F}, \mathbb{P}}$.
    \end{lemma}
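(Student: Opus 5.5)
The plan is to reduce everything to a one-parameter-at-a-time interchange of differentiation and expectation, carried out in the encoded variable $z$ and then pulled back to $x$ by the chain rule. Fix $Q$ and write
\begin{align*}
V(x;\omega,Q) = \sum_{y\in\{0,1\}^d} h_y(e(x),\omega), \qquad h_y(z,\omega) := \tilde{f}_{\text{v}}(z,\omega,Q(y))\mathbf{1}_{\tilde{K}(z;\omega)=y}.
\end{align*}
This is exact because exactly one indicator is nonzero and $\tilde{f}_{\text{v}}(e(x),\omega,Q(\tilde K(e(x);\omega)))=f_{\text{v}}(x,\omega,Q(K(x;\omega)))$ by Corollary~\ref{corollary:encoder}. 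Since the sum is finite, it suffices to prove, for each fixed $y$ and with $q=Q(y)$, that $\nabla_z \mathbb{E}[h_y(z,\omega)] = \mathbb{E}[\nabla_z h_y(z,\omega)]$ for every $z\in\mathcal{Z}$.

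The core step is coordinatewise. Fix $z\in\mathcal{Z}$ and a coordinate $j$, and let $z^{(t)} := z + t e_j$ for small $t$. Because $\mathcal{Z}$ is a product of open intervals, $z^{(t)}$ stays in $\mathcal{Z}$. By hypothesis 1, for a.s.\ $\omega$ the map $t\mapsto h_y(z^{(t)},\omega)$ is absolutely continuous, so
\begin{align*}
h_y(z^{(t)},\omega)-h_y(z,\omega)=\int_0^t \partial_j h_y(z^{(s)},\omega)\,ds,
\end{align*}
with the integrand defined for Lebesgue-a.e.\ $s$. Hypothesis 3 bounds the integrand by $g(\omega)$, which is integrable. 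Fubini--Tonelli on $[0,t]\times\Omega$ then lets us swap the integrals, giving
\begin{align*}
\mathbb{E}[h_y(z^{(t)})]-\mathbb{E}[h_y(z)]=\int_0^t \mathbb{E}[\partial_j h_y(z^{(s)},\omega)]\,ds.
\end{align*}
Dividing by $t$ and letting $t\to 0$, we want the limit to equal $\mathbb{E}[\partial_j h_y(z,\omega)]$. For this we will instead use the difference quotient directly: it converges a.s.\ to $\partial_j h_y(z,\omega)$ by hypothesis 2, and by the integral representation it is bounded by $g(\omega)$. Dominated convergence then gives the derivative of the expectation. This avoids needing continuity of $s\mapsto \mathbb{E}[\partial_j h_y(z^{(s)})]$.

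The remaining steps are then routine. Summing over $y$ gives $\nabla_z \mathbb{E}[\tilde V(z)] = \mathbb{E}[\nabla_z \tilde V(z)]$, where $\tilde V(z;\omega,Q) := \sum_y h_y(z,\omega)$. Next apply the chain rule with the differentiable bijection $e$: $\nabla_x V(x) = (\nabla_z \tilde V)^\top D_x e(x)$ a.s., which exists a.s.\ for each $x$ by hypothesis 2 and differentiability of $e$. Since $D_x e(x)$ is deterministic, it passes through the expectation. Finally, $\mathbb{E}[V(x)] = v(x;Q)$ by definition, so $\mathbb{E}[\nabla_x V] = \nabla_x v$, which is Definition~\ref{def:diffseval}. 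For the formula for $G$, note that on the event $\{\tilde K(e(x);\omega)=y\}$ the indicator is locally constant almost surely: by absolute continuity and existence of the gradient, the product's gradient there equals $\nabla_z \tilde{f}_{\text{v}}$. This yields the displayed $G$, and Corollary~\ref{corollary:sgradfromdiffsval} gives that it is a single-queried stochastic gradient.

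The main obstacle is exactly this last identification. At $z$ the indicator may switch, so $\nabla(\tilde f\,\mathbf 1)$ need not equal $\mathbf 1\,\nabla \tilde f$ on a null set. I would argue it pointwise on the a.s.\ set where hypothesis 2 holds: if the indicator is $1$ at $z$ and the product is differentiable there, the product agrees with $\tilde f$ on the open set where $\tilde K = y$. If $z$ lies on a switching boundary, I would absorb it into the exceptional null set, or otherwise interpret $\nabla_z\tilde f_{\text{v}}$ as the gradient of the product, as the lemma's formula implicitly does. A secondary technical point is measurability of $\partial_j h_y$ jointly in $(s,\omega)$, which is needed for Fubini; I would take it from the paper's standing joint-measurability convention.
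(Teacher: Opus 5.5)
Your proof has the same structure as the paper's: decompose $V$ over the events $\{\tilde K=y\}$, interchange derivative and expectation for each $h_y$, then apply Corollary~\ref{corollary:sgradfromdiffsval}. The interchange step is done differently, and better. The paper invokes Lemma~\ref{lemma:gradientswap}. Its proof writes the difference quotient as $\int_0^1\mathbb{E}[e^\top\nabla_x V(x+the,\omega)]\,dt$ via Fubini, then replaces $\lim_{h\to0}\nabla_x V(x+the,\omega)$ by $\nabla_x V(x,\omega)$. That tacitly requires the pathwise gradient to be continuous, whereas hypotheses 1--3 only give existence at the single point. You apply dominated convergence to the difference quotient itself. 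This needs exactly hypothesis 2 plus the bound $g(\omega)$ from the pathwise fundamental theorem of calculus, so your argument closes a hole in the paper's. The Fubini identity you derive is never used afterwards; you can drop it, and the joint-measurability worry with it.

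Two steps still need care.

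(i) Coordinate directions only give the partial derivatives of $\tilde v(z):=\mathbb{E}[\tilde V(z;\omega,Q)]$. Your chain-rule step needs $\tilde v$ to be differentiable at $e(x)$. You cannot borrow this from the polynomial $v$, because $e^{-1}$ is not assumed differentiable. The paper is silent here too, since Lemma~\ref{lemma:gradientswap} simply presupposes $\nabla v$. Your own device repairs it directly in $x$-space. For $u\in\mathbb{R}^d$, apply dominated convergence to $t^{-1}[V(x+tu;\omega,Q)-V(x;\omega,Q)]$.
\begin{itemize}
\item By the pathwise chain rule, its a.s.\ limit is $\nabla_z\tilde V(e(x);\omega,Q)^\top D_xe(x)\,u$.
\item It is dominated by $\bigl(\sum_y g_y(\omega)\bigr)\norm{e(x+tu)-e(x)}_1/t$. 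Here $g_y$ is the dominating function for $q=Q(y)$, and the bound comes from the Lipschitz estimate along staircase paths inside the box $\mathcal{Z}$.
\item Its expectation converges to $\nabla_x v(x;Q)^\top u$, since $v(\cdot;Q)$ is a polynomial.
\end{itemize}

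(ii) For the displayed $G$, hypotheses 1--3 constrain only the products $\tilde f_{\text{v}}\mathbf{1}_{\tilde K=y}$. They do not make $\{z:\tilde K(z;\omega)=y\}$ open. They do not make $\tilde f_{\text{v}}$ itself differentiable. They do not make the event that $e(x)$ lies on the switching boundary null. So absorbing that boundary into the exceptional null set is not available in general. What the paper's one-line proof implicitly does is read $\mathbf{1}_{\tilde K=y}\nabla_z\tilde f_{\text{v}}$ as $\nabla_z(\tilde f_{\text{v}}\mathbf{1}_{\tilde K=y})$, that is, $G=\nabla_x V$. Under that reading, or with an extra hypothesis, your argument is complete.
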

    \begin{proof}
        We first have that 
        \begin{align*}
            \expect{\nabla_z\oo{\tilde{f}_{\text{v}}(z, \omega, q)\mathbf{1}_{\tilde{K}(z;\omega)= y}}} = 
            \nabla_z \expect{\tilde{f}_{\text{v}}(z, \omega, q)\mathbf{1}_{\tilde{K}(z;\omega)= y}}
        \end{align*}
        for all $z \in \mathcal{Z}$ by the lemma~\ref{lemma:gradientswap}, then we can apply the corollary~\ref{corollary:sgradfromdiffsval} to get the result.
    \end{proof}
    Although this lemma may seem demanding, we can consider the main condition to be that the mapping is continuous and gradient is well-defined for all $x\in (0,1)^d$ for almost surely $\omega \in \Omega$, the encoding (in the corollary~\ref{corollary:encoder}) and other conditions in the lemma~\ref{lemma:whengradient} are mostly for regularity condition. For example, the class of stochastic evaluation $V(x; \omega) = Q(K(x; \omega))$ will not satisfy the continuity condition. Note that this is also a sufficient condition and is not a necessary condition. We can weaken the global gradient domination to difference quotient domination.

    From the lemma~\ref{lemma:whengradient}, the proof for the theorem~\ref{theorem:easy} is straightforward and written down below.
    \begin{proof}
        For any $x \in (0,1)^d$, any $q \in \mathbb{R}$, and any $y \in \{0,1\}^d$, we have that
        \begin{align*}
            \expect{f_{\text{v}}(x, \omega, q) \mathbf{1}_{K(x;\omega) = y}} 
            &=
            q
            \mexpect{\epsilon_i \overset{\text{iid}}{\sim} \sigma}{\oo{\prod_{i=1}^d f\oo{\abs{\hat{\sigma}^{-1}(x_i) +\epsilon_i}}} \mathbf{1}_{\cc{\mathbf{1}_{\hat{\sigma}^{-1}(x_i) +\epsilon_i \ge 0}}_{i=1}^d = y}}\\
            &=
            q
            \mexpect{\epsilon_i \overset{\text{iid}}{\sim} \sigma}{\oo{\prod_{i=1}^d f\oo{\abs{\hat{\sigma}^{-1}(x_i) +\epsilon_i}}} 
            \prod_{i=1}^d \mathbf{1}_{\mathbf{1}_{\hat{\sigma}^{-1}(x_i) +\epsilon_i \ge 0} = y_i}}\\
            &=
            q
            \prod_{i=1}^d
            \mexpect{\epsilon \sim \sigma}{ f\oo{\abs{\hat{\sigma}^{-1}(x_i) +\epsilon_i}}\mathbf{1}_{\mathbf{1}_{\hat{\sigma}^{-1}(x_i) +\epsilon_i \ge 0} = y_i}}.
        \end{align*}

        From that $f(z) = 0$ for all $z\le 0$, we have that, if $y_i = 1$, then
        \begin{align*}
            \mexpect{\epsilon \sim \sigma}{ f\oo{\abs{\hat{\sigma}^{-1}(x_i) +\epsilon}}\mathbf{1}_{\mathbf{1}_{\hat{\sigma}^{-1}(x_i) +\epsilon \ge 0} = y_i}}
            &=
            \mexpect{\epsilon \sim \sigma}{ f\oo{\abs{\hat{\sigma}^{-1}(x_i) +\epsilon}}\mathbf{1}_{\hat{\sigma}^{-1}(x_i) +\epsilon \ge 0}}\\
            &=
            \mexpect{\epsilon \sim \sigma}{ f\oo{\hat{\sigma}^{-1}(x_i) +\epsilon}}\\
            &=
            x_i\\
            &= x_iy_i + (1-x_i)(1-y_i),
        \end{align*}
        and, if $y_i = 0$, then
        \begin{align*}
            \mexpect{\epsilon \sim \sigma}{ f\oo{\abs{\hat{\sigma}^{-1}(x_i) +\epsilon}}\mathbf{1}_{\mathbf{1}_{\hat{\sigma}^{-1}(x_i) +\epsilon \ge 0} = y_i}}
            &=
            \mexpect{\epsilon \sim \sigma}{ f\oo{\abs{\hat{\sigma}^{-1}(x_i) +\epsilon}}\mathbf{1}_{\hat{\sigma}^{-1}(x_i) +\epsilon < 0}}\\
            &=
            \mexpect{\epsilon \sim \sigma}{ f\oo{\abs{\hat{\sigma}^{-1}(x_i) +\epsilon}}\mathbf{1}_{\hat{\sigma}^{-1}(x_i) +\epsilon \le 0}} + \mexpect{\epsilon \sim \sigma}{ f\oo{\abs{0}}\mathbf{1}_{\hat{\sigma}^{-1}(x_i) +\epsilon = 0}}\\
            &=
            \mexpect{\epsilon \sim \sigma}{ f\oo{\abs{\hat{\sigma}^{-1}(x_i) +\epsilon}}\mathbf{1}_{\hat{\sigma}^{-1}(x_i) +\epsilon \le 0}}\\
            &=
            \mexpect{\epsilon \sim \sigma}{ f\oo{\abs{-\hat{\sigma}^{-1}(x_i) +\epsilon}}\mathbf{1}_{-\hat{\sigma}^{-1}(x_i) +\epsilon \ge 0}}\\
            &=
            \mexpect{\epsilon \sim \sigma}{ f\oo{\abs{\hat{\sigma}^{-1}(1-x_i) +\epsilon}}\mathbf{1}_{\hat{\sigma}^{-1}(1-x_i) +\epsilon \ge 0}}\\
            &=
            1-x_i\\
            &= x_iy_i + (1-x_i)(1-y_i).
        \end{align*}
        Thus, we have that
        \begin{align*}
            \expect{f_{\text{v}}(x, \omega, q) \mathbf{1}_{K(x;\omega) = y}} = q\prod_{i=1}^d \cc{x_iy_i + (1-x_i)(1-y_i)}.
        \end{align*}
        From the proposition~\ref{proposition:seval_characterization}, we then have that $V$ is a single-queried stochastic evaluation function.

        Next, we will show that $V$ is gradientable y showing that the conditions in the lemma~\ref{lemma:whengradient} are satisfied by choosing the function $e$ to be $\hat{\sigma}^{-1}$. We have that, for any $z \in \interior\oo{\supp\oo{\hat{\sigma}}}$, almost surely $\omega \in \Omega$, any $q \in \mathbb{R}$, any $y \in \{0,1\}^d$,
        \begin{align*}
            \tilde{f}_{\text{v}}(z,\omega, q)\mathbf{1}_{\tilde{K}(z;\omega) = y}
            &=
            q
            \prod_{i=1}^d
            \cc{
            f\oo{\abs{z_i +\epsilon_i(\omega)}}\mathbf{1}_{\mathbf{1}_{z_i +\epsilon_i(\omega) \ge 0} = y_i}}\\
            &=
            q
            \prod_{i=1}^d
            \cc{
            y_i f\oo{z_i +\epsilon_i(\omega)} + (1-y_i) f\oo{-z_i -\epsilon_i(\omega)}}
        \end{align*}
        The function $f$ is absolutely continuous, so the mapping $z \mapsto \tilde{f}_{\text{v}}(z,\omega, q)$ is absolutely continuous for almost surely $\omega$.

        Moreover, for any $i \in \{1,2,\dots, d\}$,
        \begin{align*}
            \frac{\partial}{\partial z_i} \oo{\tilde{f}_{\text{v}}(z,\omega, q)\mathbf{1}_{\tilde{K}(z;\omega) = y}}
            &=
            \mathbf{1}_{\tilde{K}(z;\omega) = y}
            \frac{\partial}{\partial z_i} \tilde{f}_{\text{v}}(z,\omega, q)\\
            &=
            q
            \cc{
            \mathbf{1}_{\tilde{K}(z;\omega) = y}
            \prod_{j \ne i} f(\abs{z_j+ \epsilon_j(\omega)}) 
            }
            \frac{\partial}{\partial z_i} f(\abs{z_i+ \epsilon_i(\omega)}),
        \end{align*}
        which is bounded for almost surely $\omega$ by the definition~\ref{def:goodtuple}.

        The existence of gradient for all $z \in \interior\oo{\supp\oo{\hat{\sigma}}}$ almost surely $\omega$ also follows from the definition~\ref{def:goodtuple}.

        Thus, by applying the~\ref{lemma:whengradient}, we have that $V$ is gradientable, and the gradient can be computed to be in the form shown in the theorem.
    \end{proof}

\subsection{Technical Lemma}

    \begin{lemma}
    \label{lemma:gradientswap}
        Assume $v: \mathcal{X} \to \mathbb{R}$ for some $\mathcal{X}$ is a set product of $d \in \mathbb{N}$ non-empty open intervals, and $\nabla_x:\mathcal{X} \to \mathbb{R}^d$ exists and is bounded.
        For a probability space $\oo{\Omega, \mathcal{F}, \mathbb{P}}$, if $V: \mathcal{X} \times \Omega \to \mathbb{R}$ be such that
        \begin{align*}
            \expect{V(x; \omega)} = v(x)
        \end{align*}
        for all $x \in \mathcal{X}$, and the following conditions hold:
        \begin{enumerate}
            \item For $\mathbb{P}$-almost surely $\omega \in \Omega$, the function $V(\cdot; \omega)$ is absolutely continuous;
            \item For all $x \in \mathcal{X}$, the gradient $\nabla_x V(x; \omega)$ exists for $\mathbb{P}$-almost surely $\omega \in \Omega$;
            \item There exists an $L^1(\Omega)$ function $g$ such that 
            \begin{align*}
                \norm{\nabla_x V(x; \omega)}_{\infty} \le g(\omega)
            \end{align*}
            for all $x \in \mathcal{X}$, $\omega \in \Omega$,
        \end{enumerate}
        then, for all $x \in \mathcal{X}$,
        \begin{align*}
            \nabla v(x) = \expect{\nabla_x V(x; \omega)}.
        \end{align*}
    \end{lemma}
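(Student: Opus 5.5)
The plan is to reduce the multivariate statement to one-dimensional slices, use the fundamental theorem of calculus for absolutely continuous functions pathwise, exchange expectation and integral by Fubini, and then differentiate under the integral sign.

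Fix $x \in \mathcal{X}$ and a coordinate $i$, and write $x + t e_i$ for $t$ in an interval $I$ with $x+te_i \in \mathcal{X}$ for all $t\in I$; this is possible because $\mathcal{X}$ is a product of open intervals. Condition 1 gives absolute continuity of $V(\cdot;\omega)$ for almost every $\omega$. I will interpret this along coordinate lines, so that $t \mapsto V(x+te_i;\omega)$ is absolutely continuous on compact subintervals. Hence for $h$ small,
\begin{align*}
    V(x+he_i;\omega) - V(x;\omega) = \int_0^h \partial_i V(x+te_i;\omega)\,dt
\end{align*}
for almost every $\omega$, where the integrand is defined for Lebesgue-a.e.\ $t$.

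Next take expectations. By condition 3, $\abs{\partial_i V(x+te_i;\omega)} \le g(\omega)$ wherever the derivative exists, so the integrand is dominated by $g(\omega)$, and $\int_\Omega\int_0^{|h|} g \,dt\, d\mathbb{P} = |h|\,\norm{g}_{L^1} < \infty$. Joint measurability of $(t,\omega)\mapsto \partial_i V$ holds because it is a limit of measurable difference quotients on the set where the limit exists. Fubini then gives
\begin{align*}
    v(x+he_i) - v(x) = \int_0^h \phi(t)\,dt, \qquad \phi(t) := \expect{\partial_i V(x+te_i;\omega)}.
\end{align*}
Dividing by $h$, it remains to show that $\frac1h\int_0^h \phi(t)\,dt \to \phi(0)$. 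For this I use condition 2: for every $t$, the derivative exists almost surely. The key point is that $\phi(t)$ is defined for every $t$, not just almost every $t$, and I need continuity of $\phi$ at $0$.

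This continuity step is the main obstacle. Pointwise existence of derivatives does not by itself give $\partial_i V(x+te_i;\omega)\to\partial_i V(x;\omega)$ as $t\to0$, so dominated convergence cannot be applied to $\phi$ directly. I plan to sidestep this. Since $v$ is differentiable by hypothesis, the displayed identity shows that $\partial_i v(x+te_i)=\phi(t)$ for almost every $t$, by the Lebesgue differentiation theorem. To get equality at the specific point $t=0$, I instead use the difference quotient directly: $\frac{V(x+he_i;\omega)-V(x;\omega)}{h}\to \partial_i V(x;\omega)$ for almost every $\omega$ by condition 2. The quotient is bounded by $g(\omega)$, by the mean-value inequality for absolutely continuous functions, since it equals an average of derivatives each bounded by $g$. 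Dominated convergence then yields
\begin{align*}
    \partial_i v(x)=\lim_{h\to0}\expect{\tfrac{V(x+he_i;\omega)-V(x;\omega)}{h}}=\expect{\partial_iV(x;\omega)}.
\end{align*}
This route removes the need for continuity of $\phi$ altogether; the earlier Fubini step is then used only to justify the domination bound. Assembling the coordinates gives $\nabla v(x)=\expect{\nabla_x V(x;\omega)}$.
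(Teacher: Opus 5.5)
Your proof is correct, and at the decisive limit step it is genuinely different from the paper's, in your favour. Both arguments start from the pathwise fundamental theorem of calculus. The paper uses it in the form $V(x+he,\omega)-V(x,\omega)=\int_0^1 h\,e^\top\nabla_x V(x+the,\omega)\,dt$, and then applies Fubini to write $\frac{v(x+he)-v(x)}{h}=\int_0^1\mathbb{E}[e^\top\nabla_x V(x+the,\omega)]\,dt$. It then lets $h\to0$ inside both integrals by dominated convergence, replacing $\lim_{h\to0}\nabla_x V(x+the,\omega)$ with $\nabla_x V(x,\omega)$. That replacement assumes the pathwise gradient is continuous at $x$, which none of the three conditions provides. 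Condition 2 only gives existence at each fixed point, with an $x$-dependent null set. This is exactly the obstacle you flagged.

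You instead apply dominated convergence to the difference quotients themselves. They converge almost surely by condition 2 used at the single point $x$, and you use the fundamental theorem of calculus only to dominate them by $g(\omega)$. Your route buys three things. You need no continuity of $\nabla_x V$. You need no Fubini interchange or joint measurability. You need nothing about $\nabla v$ beyond identifying it with the vector of partials, which your argument computes directly. The same argument also works verbatim in an arbitrary direction $e$, with bound $\norm{e}_1 g(\omega)$, if you want the paper's directional form.

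One small correction to your write-up: the Fubini identity for $\phi$ and the Lebesgue-differentiation remark are not needed even ``to justify the domination bound.'' The bound $\abs{(V(x+he_i;\omega)-V(x;\omega))/h}\le g(\omega)$ is a pathwise statement and involves no expectation. A Fubini/Tonelli argument would enter only under one reading of condition 3: that it bounds the full gradient solely at points where the gradient exists. In that case you would need, for almost every $\omega$, the full gradient to exist at almost every point of the segment, which follows from condition 2. The paper's proof depends on that reading just as much, so this is not a gap relative to it.
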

    \begin{proof}
        For any $\omega \in \Omega$, where the function $V(\cdot, \omega)$ is absolutely continuous, we have that there exists $\nabla_x V(x, \omega)$ for almost every $x$, and, for any $x \in \mathcal{X}$, and any $e \in \mathbb{R}^d$ with $\norm{e}_2 =1$, and $h > 0$ such that $x +he \in \mathcal{X}$, we will have that
        \begin{align*}
            V(x+he, \omega) - V(x, \omega) = \int_{t=0}^1 h e^\top \nabla_x V(x+the, \omega) dt.
        \end{align*}
        Since this holds for $\mathbb{P}$-almost surely $\omega \in \Omega$, 
        \begin{align*}
            \frac{v(x+he)-v(x)}{h} =
            \expect{
                \frac{V(x+he, \omega)-V(x, \omega)}{h}
            }
            =
            \expect{
            \int_{t=0}^1 e^\top \nabla_x V(x+the, \omega) dt}.
        \end{align*}
        Since $\norm{e}_2 =1$, we have that, for all $t \in [0,1]$ where the gradient exists,
        \begin{align*}
            \abs{e^\top \nabla_x V(x+the, \omega)} \le  d \norm{\nabla_x V(x+the, \omega)}_\infty \le d g(\omega).
        \end{align*}
        Thus, we have that
        \begin{align*}
            \expect{
            \int_{t=0}^1 e^\top \nabla_x V(x+the, \omega) dt}
            &=
            \int_{\omega \in \Omega}
            \int_{t=0}^1 e^\top \nabla_x V(x+the, \omega) dt \mathbb{P}(d\omega)\\
            &\le
            \int_{\omega \in \Omega}
            \int_{t=0}^1 d g(\omega) dt \mathbb{P}(d\omega)\\
            &=
            \int_{\omega \in \Omega}
             d g(\omega) \mathbb{P}(d\omega)\\
            &< \infty.
        \end{align*}
        Note that $\nabla_x V(x, \omega)$ is jointly measurable in $\mathcal{B}\oo{\mathbb{R}^d} \times \mathcal{F}$, so $e^\top \nabla_x V(x+the, \omega)$ is jointly measurable in $\mathcal{B}([0,1]) \times \mathcal{F}$. We can then apply Fubini's theorem to get that
        \begin{align*}
            \frac{v(x+he)-v(x)}{h}
            =
            \int_{t=0}^1
            \expect{
             e^\top \nabla_x V(x+the, \omega) dt},
        \end{align*}
        making
        \begin{align*}
            \lim_{h\to 0}\frac{v(x+he)-v(x)}{h}
            =
            \lim_{h\to 0}
            \int_{t=0}^1
            \expect{
             e^\top \nabla_x V(x+the, \omega) dt}.
        \end{align*}
        We further apply dominated convergence to get that
        \begin{align*}
            \lim_{h\to 0}\int_{t=0}^1\expect{e^\top \nabla_x V(x+the, \omega)} dt
            &=
            \int_{t=0}^1
            \expect{e^\top \oo{\lim_{h\to 0} \nabla_x V(x+the, \omega)}}dt\\
            &=
            \int_{t=0}^1
            \expect{e^\top \nabla_x V(x, \omega)}dt\\
            &=
            \expect{e^\top \nabla_x V(x, \omega)}.
        \end{align*}
        Thus, we have that
        \begin{align*}
            e^\top \nabla v(x)
            =
            \lim_{h\to 0}\frac{v(x+he)-v(x)}{h}
            =
            \expect{e^\top \nabla_x V(x, \omega)}
            =
            e^\top
            \expect{\nabla_x V(x, \omega)}.
        \end{align*}
        From that $\mathcal{X}$ is open, this holds for any direction $e$, so
        \begin{align*}
            \nabla v(x) =\expect{\nabla_x V(x, \omega)}.
        \end{align*}
    \end{proof}

\subsection{Deferred Proofs from the Section~\ref{section:desiderata}}
    \paragraph{Proof for the Lemma~\ref{lemma:no_fsigsig}}
    
        \begin{proof}
            First, we prove for the compact support case.
            If it exists, then $\sigma = \hat{\sigma}$, so $\sigma'$ exists. Thus, $\oo{f \ast \sigma'} (z) = \sigma(z)$ for all $z \in \interior(\supp(\sigma))$.
            
            Define a function $\bar{f}(z) := \mathbf{1}_{z \ge 0}$, so $\oo{\bar{f} \ast \sigma'} (z) = \int_{x=-\infty}^z \sigma'(z) dz = \sigma(z)$. Thus, we have that
            \begin{align*}
                \oo{\oo{\bar{f} - f}\ast \sigma'}(z) = 0
            \end{align*}
            for all $z \in \interior(\supp(\sigma))$.
    
            Since $\sigma$ is bounded and symmetric, we have that $\supp(\sigma)= [-C,C]$ for some $C > 0$.
    
            Thus, we have that, for any $z \in (0,C)$, we will have that
            \begin{align*}
                \oo{\oo{\bar{f} - f}\ast \sigma'}(-z) = \int_{x=0}^{C-z} \oo{\bar{f} - f}(x) \sigma'(z+x) dx.
            \end{align*}
            Since $f(0) = 0$ and $f$ is continuous, so  there exists, some $\epsilon \in (0,C)$ such that $f(x) < \frac{1}{2}$ for all $x \in [0, \epsilon]$. Thus, $\oo{\bar{f} - f}(x) >\frac{1}{2}$ for all $x \in [0, \epsilon]$. Therefore, by choosing $z = C - \epsilon$, we will have that
            \begin{align*}
                \oo{\oo{\bar{f} - f}\ast \sigma'}(-z) = \int_{x=0}^{\epsilon} \oo{\bar{f} - f}(x) \sigma'(z+x) dx \ge \frac{1}{2} \int_{x=0}^{\epsilon} \sigma'(z+x) dx = \frac{1}{2} \sigma(-C+\epsilon) > 0,
            \end{align*}
            creating a contradiction.
            
            The proof for decreasing hazard rate case is almost identical.
        \end{proof}

\section{Experiment Results}
\label{appendix:result}

    \subsection{Result: Symmetric-Slice Optimization from the subsection~\ref{subsection:symmetricslice}}
    \label{appendix:symmetricslice}

    \begin{figure}
        \centering
        \begin{subfigure}
                \centering
                \includegraphics[width=0.8\textwidth]{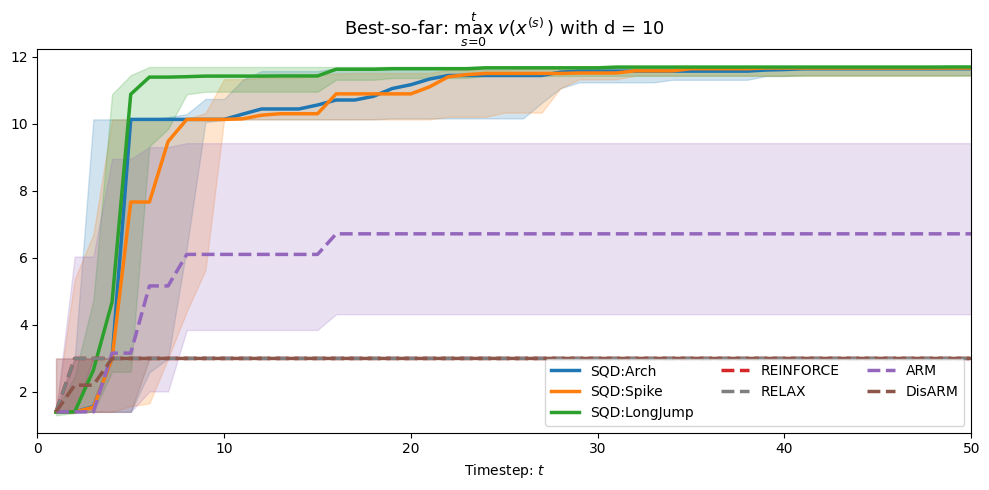}
        \end{subfigure}
        \begin{subfigure}
                \centering
                \includegraphics[width=0.8\textwidth]{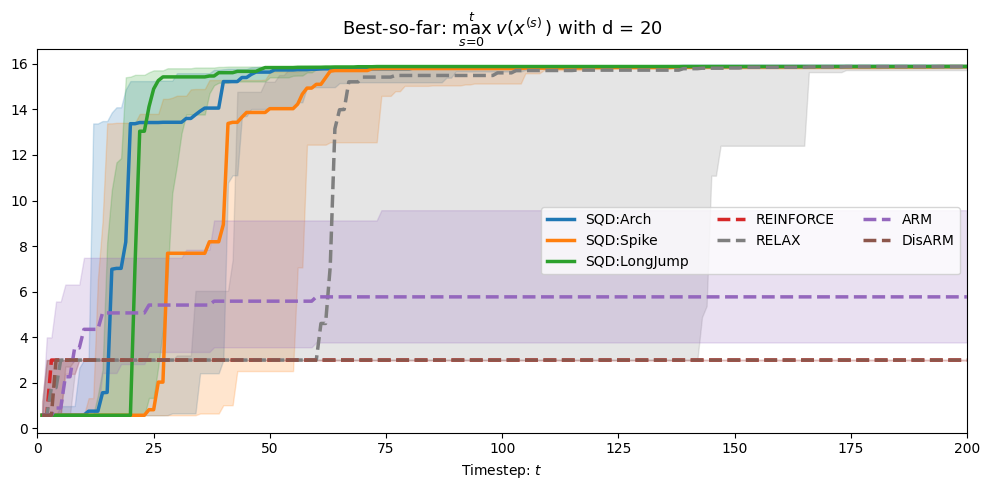}
        \end{subfigure}
        \begin{subfigure}
                \centering
                \includegraphics[width=0.8\textwidth]{figure/SymmetricSlice/30.png}
        \end{subfigure}
        \caption{Result for the experiment for symmetric-slice optimization from~\ref{subsection:symmetricslice}: Median of the best-so-far progress of different algorithms with inter-quartiles uncertainty. The algorithm we proposed are in bold line, and the pre-existing algorithms are in a dashed line. The experiment comes from $20$ trials. Note that the time-scale for the 3 plots (with 3 different dimensions) are different.}
    \end{figure}

    For the experiment comparing the SQD with Encoded SQD, we will have that the encoded version, except for the LongJump whose encoding function is linear, tends to have a more gradual progress. Both methods, SQD and Encoded SQD, are not stuck at the local minimum.
    \begin{figure}
        \centering
        \begin{subfigure}
                \centering
                \includegraphics[width=0.8\textwidth]{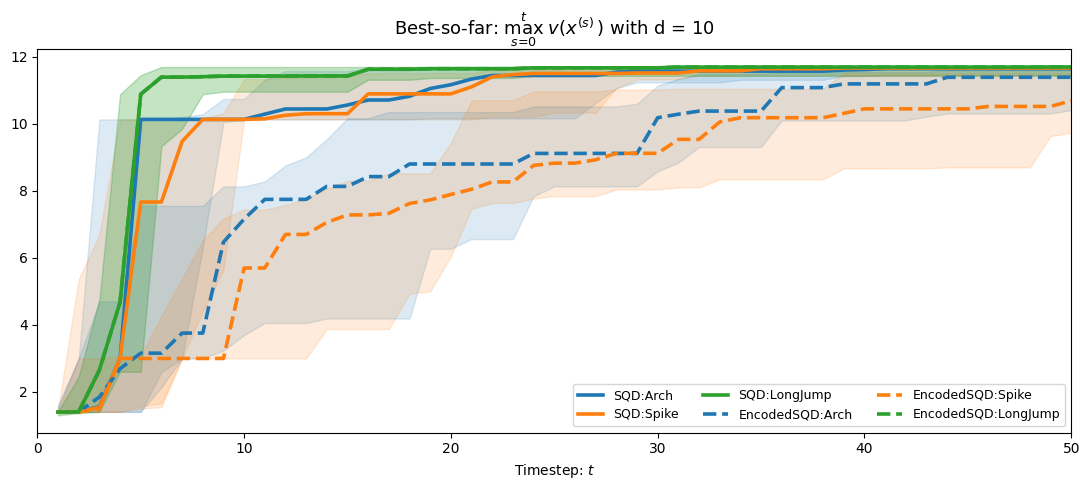}
        \end{subfigure}
        \begin{subfigure}
                \centering
                \includegraphics[width=0.8\textwidth]{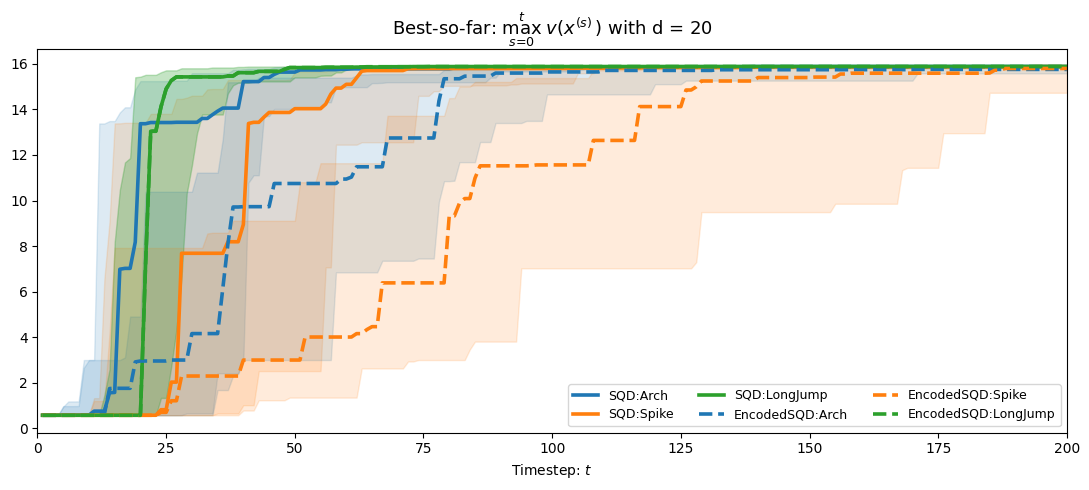}
        \end{subfigure}
        \begin{subfigure}
                \centering
                \includegraphics[width=0.8\textwidth]{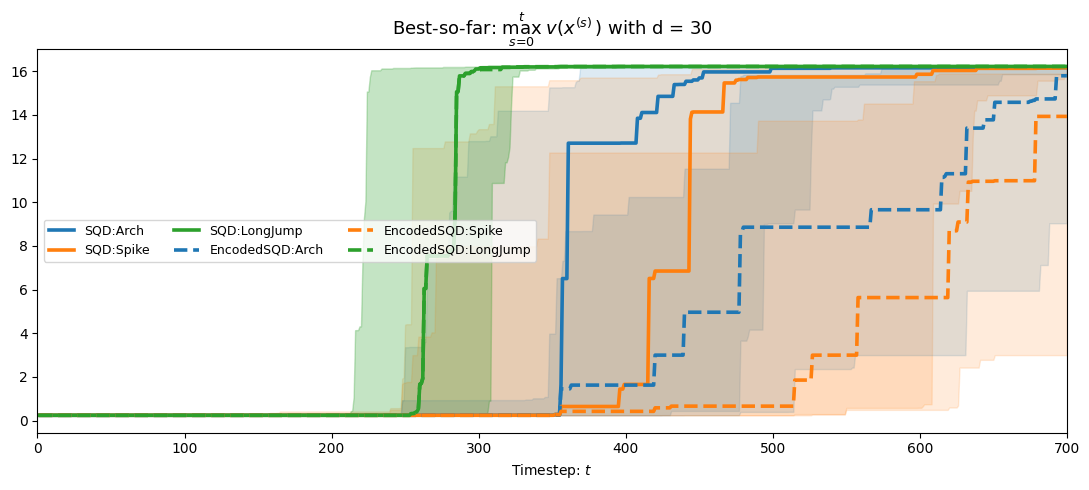}
        \end{subfigure}
        \caption{Result for the experiment for symmetric-slice optimization from~\ref{subsection:symmetricslice}: The bold lines are our SQD algorithms, while the dashed lines are our Encoded SQD algorithms. The encoding function $\hat{\sigma}^{-1}\oo{\cdot}$ is linear in the case of ``SQD: LongJump", so the two methods, ``SQD: LongJump" and `EncodedSQD: LongJump" have roughly the same performance.}
    \end{figure}

    \subsection{Modified Knapsack Experiment}
    \label{appendix:knapsack}
    Here, we provide some preliminary results for modified Knapsack problem. In this problem, we randomly assign weights $w$ from $\{1,2,\dots,9\}$ uniformly and independently to each index $i \in \{1,2,\dots, d\}$. 

    We denote the sum of elements chosen given $y \in \{0,1\}^d$ to be $S(y) = \sum_{i=1}^d y_iw_i$, and define the payoff to be
    \begin{align*}
        Q(y)=
        \begin{cases}
        20 & \text{if } T-2 \le S(y)\le T+2\\
        -5 & \text{ if } S(y) > T+2\\
        0 & \text{ if } S(y) < T-2
        \end{cases}
    \end{align*}
    where the target $T = \floor{\frac{1}{2}\sum_{i=1}^d w_i}$.

    \begin{figure}
        \centering
        \begin{subfigure}
                \centering
                \includegraphics[width=0.8\textwidth]{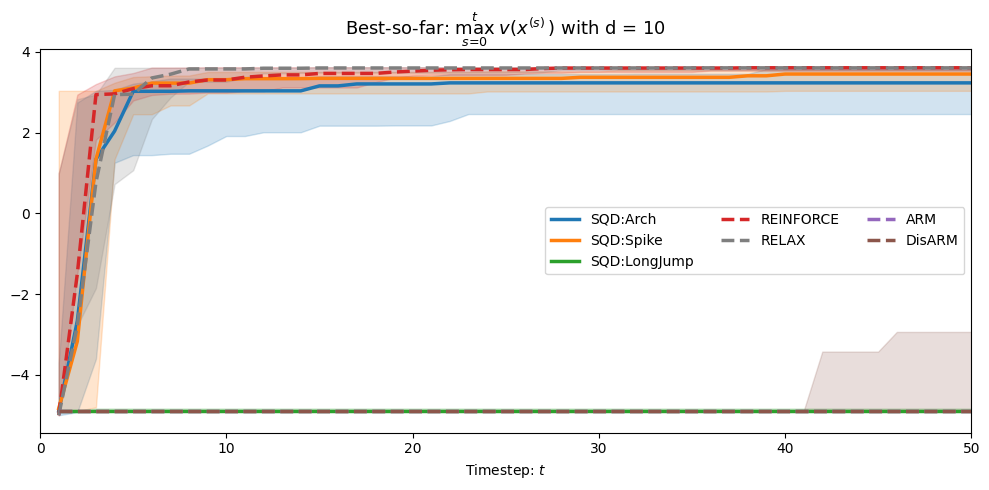}
        \end{subfigure}
        \begin{subfigure}
                \centering
                \includegraphics[width=0.8\textwidth]{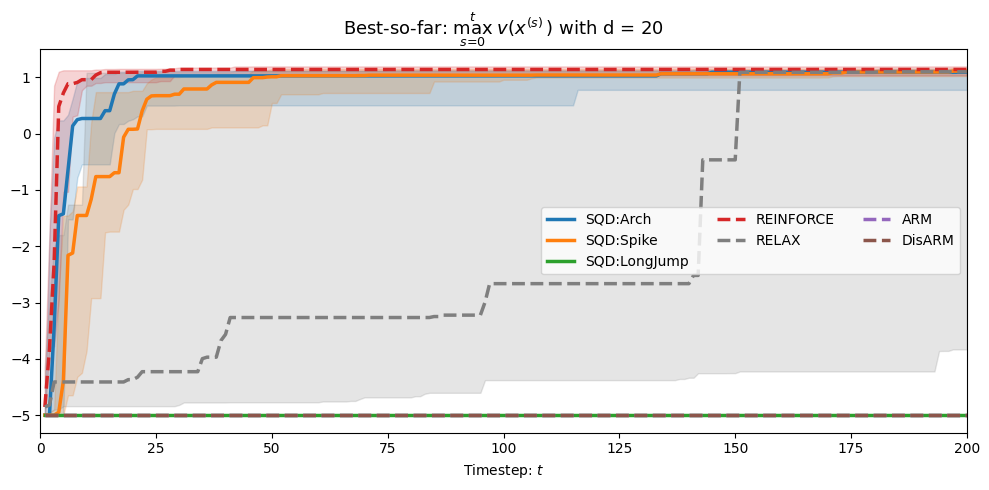}
        \end{subfigure}
        \begin{subfigure}
                \centering
                \includegraphics[width=0.8\textwidth]{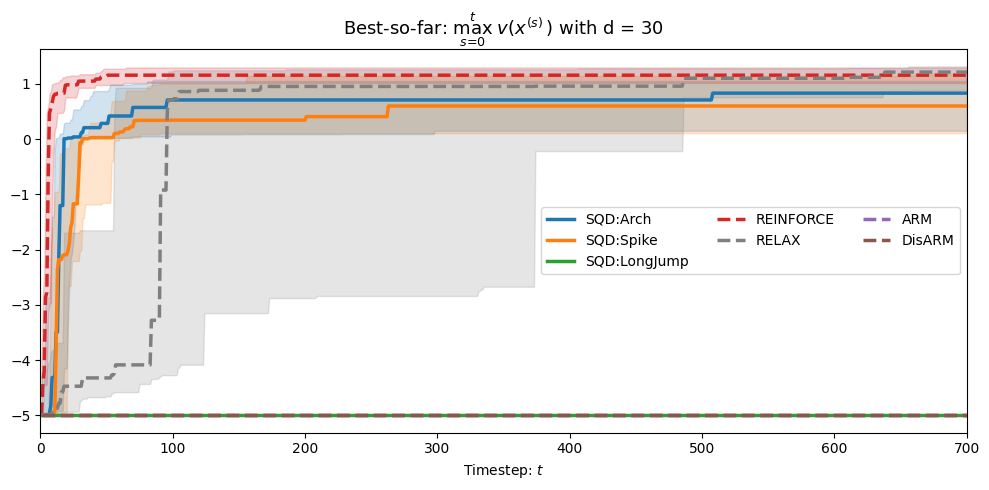}
        \end{subfigure}
        \caption{Result for the experiment for modified Knapsack problem: Median of the best-so-far progress of different algorithms with inter-quartiles uncertainty. The algorithm we proposed are in bold line, and the pre-existing algorithms are in a dashed line. The experiment comes from $20$ trials. Note that the time-scale for the 3 plots (with 3 different dimensions) are different.}
    \end{figure}

\section{Comparison between ESG \& Gradient in REINFORCE}
\label{appendix:ESGREINFORCE}
    
    In the figure~\ref{fig:comparison}, we provide a side-by-side comparison between the computational graphs of ESG algorithm~\ref{alg:ESG} and REINFORCE gradient. As point out in the session~\ref{section:desiderata}, our algorithm ESG requires that $\sigma \ne \hat{\sigma}$, leading to miscalibration. REINFORCE is calibrated, so $\sigma$ is used for both encoding (through $\sigma^{-1}(\cdot)$) and noise sampling. 

    Another main difference is that, REINFORCE gradient is from taking a gradient of a proxy $\hat{v}$ in the figure instead of the unbiased estimator $v$. However, such $\hat{v}$ we take a gradient upon does not have much connection to the actual value function.

    \begin{figure}
        \centering
        \begin{subfigure}
            \centering
            \includegraphics[width=0.9\textwidth]{figure/ESG.png}
            \caption*{(a) Computational Graph for ESG Algorithm~\ref{alg:ESG}}
        \end{subfigure}
        \begin{subfigure}
            \centering
            \includegraphics[width=0.9\textwidth]{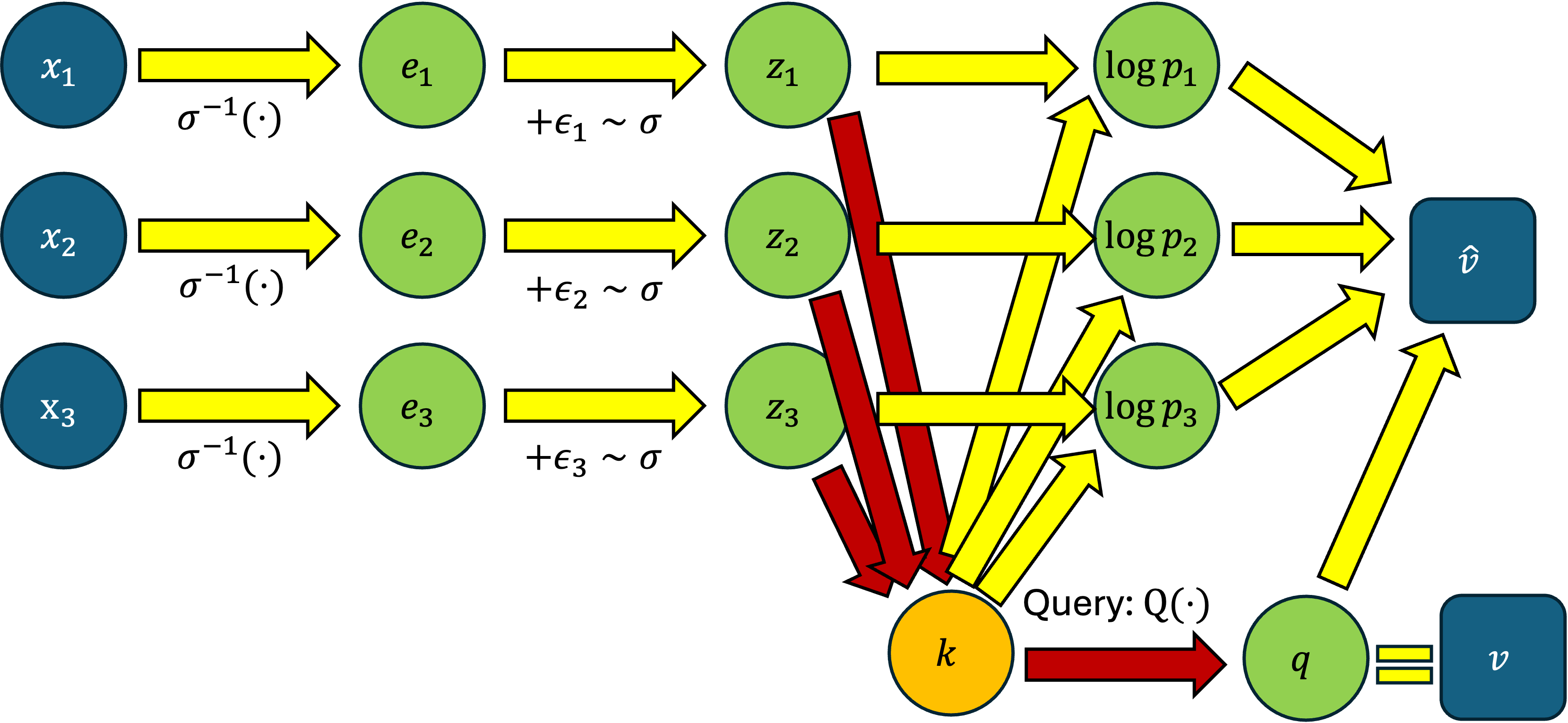}
            \caption*{(b) Computational Graph for REINFORCE Gradient}
        \end{subfigure}
        \caption{Comparison between Computational Graphs of ESG and REINFORCE Gradient}
        \label{fig:comparison}
    \end{figure}

\end{document}